\DeclareRobustCommand\onedot{\futurelet\@let@token\@onedot}
\def\@onedot{\ifx\@let@token.\else.\null\fi\xspace}
\def\eg{\emph{e.g}\onedot} 
\def\ie{\emph{i.e}\onedot}
\def\etal{\emph{et al}\onedot}
\def\Vec#1{{\boldsymbol{#1}}}
\def\Mat#1{{\boldsymbol{#1}}}
\def\GRASS#1#2{\mathcal{G}({#1},{#2})}
\newcommand{\tr}{\mathop{\rm  Tr}\nolimits}
\begin{document}
\pagestyle{headings}
\mainmatter
\title{Expanding the Family of Grassmannian Kernels: \\An Embedding Perspective} 

\titlerunning{Expanding the Family of Grassmannian Kernels}

\authorrunning{Harandi \etal}

\author
  {
  Mehrtash~T.~Harandi
  \and
  Mathieu~Salzmann
  \and
  Sadeep~Jayasumana
  \and
  \\Richard~Hartley
  \and
  Hongdong~Li  
  }

\institute
  {
  Australian National University, Canberra, ACT 0200, Australia\\
  ~NICTA\thanks
    {
    NICTA is funded by the Australian Government as represented by the Department of
    Broadband, Communications and the Digital Economy, as well as by the Australian
    Research Council through the ICT Centre of Excellence program.
    }%
    , Locked Bag 8001, Canberra, ACT 2601, Australia%
  }

\maketitle

\begin{abstract}
Modeling videos and image-sets as linear subspaces has proven beneficial for many visual recognition tasks. However, it also incurs challenges arising from the fact that linear subspaces do not obey Euclidean geometry, but lie on a special type of Riemannian manifolds known as Grassmannian. To leverage the techniques developed for Euclidean spaces (\eg, support vector machines) with subspaces, several recent studies have proposed to embed the Grassmannian into a Hilbert space by making use of a positive definite kernel. Unfortunately, only two Grassmannian kernels are known, none of which -as we will show- is \emph{universal}, which limits their ability to approximate a target function arbitrarily well. Here, we introduce several positive definite Grassmannian kernels, including universal ones, and demonstrate their superiority over previously-known kernels in various tasks, such as classification, clustering, sparse coding and hashing.
\keywords{Grassmann manifolds, kernel methods, Pl\"{u}cker embedding}
\end{abstract}

\section{Introduction}
\label{sec:intro}

This paper introduces a set of positive definite kernels  to embed Grassmannians (\ie, manifolds of linear subspaces that have a nonlinear
Riemannian structure) into Hilbert spaces, which have a more familiar Euclidean structure. 
Nowadays, linear subspaces are a core representation of many visual recognition techniques. 
For example, several state-of-the-art video, or image-set, matching methods model the visual data as 
subspaces~\cite{Hamm:ICML:2008,Harandi:CVPR:2011,Turaga:PAMI:2011,Vemula:CVPR:2013,Jayasumana_2014_CVPR}.
Linear subspaces have also proven a powerful representation for many other computer vision applications, such as chromatic noise filtering~\cite{Subbarao:IJCV:2009} and
domain adaptation~\cite{Gopalan:PAMI:2013}.

Despite their success, linear subspaces suffer from the drawback that they cannot be analyzed using Euclidean geometry. Indeed, subspaces lie on a special type of Riemannian manifolds, the Grassmann manifold, which has a nonlinear structure. As a consequence, popular techniques developed for Euclidean spaces do not apply. Recently, this problem has been addressed by embedding the Grassmannian into a Hilbert space. This can be achieved either by tangent space approximation of the manifold, or by exploiting a positive definite kernel function to embed the manifold into a reproducing kernel Hilbert space (RKHS). In either case, any existing Euclidean technique can then be applied to the embedded data, since Hilbert spaces obey Euclidean geometry. Recent studies, however, report superior results with RKHS embedding over flattening the manifold using its tangent spaces~\cite{Hamm:ICML:2008,Vemula:CVPR:2013,Sadeep:CVPR:2013}. Intuitively, this can be attributed to the fact that a tangent space is a first order approximation to the true geometry of the manifold, whereas, being higher-dimensional, an RKHS has the capacity of better capturing the nonlinearity of the manifold.

While RKHS embeddings therefore seem preferable, their applicability is limited by the fact that only very few positive definite Grassmannian kernels are known. Indeed, in the literature, only two kernels have been introduced to embed Grassmannians into RKHS: the Binet-Cauchy kernel~\cite{Wolf:JMLR:2003} and the projection kernel~\cite{Hamm:ICML:2008}. The former is a homogeneous second order polynomial kernel, while the latter is a linear kernel. As simple (low-order) polynomial kernels, they are limited in their ability to closely approximate arbitrary functions. In contrast, universal kernels provide much better generalization power~\cite{Steinwart:Book,Micchelli:JMLR:2006}.

In this paper, we introduce a set of new positive definite Grassmannian kernels, which, among others, includes universal Grassmannian kernels. To this end, we start from the perspective of the two embeddings from which the Binet-Cauchy and the projection kernels are derived: the Pl\"ucker embedding and the projection embedding. These two embeddings yield two distance functions. We then exploit the properties of these distances, in conjunction with several theorems analyzing the positive definiteness of kernels, to derive the ten new Grassmannian kernels summarized in Table~\ref{tab:grassmannian_kernel}. 

Our experimental evaluation demonstrates the benefits of our Grassmannian kernels for classification, clustering, sparse coding and hashing. Our results show that our kernels outperform the Binet-Cauchy and projection ones for gender and gesture recognition, pose categorization and mouse behavior analysis. 

\begin{table}[!tb] 
{\centering 
  \caption  {    \small
    The proposed Grassmannian kernels and their properties.
    }
  \label{tab:grassmannian_kernel}
  {\small
  \begin{tabular}{m{1.75cm} m{6.5cm} m{1.5cm} m{2cm}}
    \toprule
    {\bf Kernel}   & {\bf Equation}  &{\bf Cond.} &{\bf Properties}\\
    \toprule
    \multirow{2}{*}{Polynomial}
    & \(\displaystyle k_{p,bc}(\Mat{X},\Mat{Y}) = \Big(\beta + \big|\det\big(\Mat{X}^T\Mat{Y} \big)\big| \Big)^\alpha\)
    & \(\displaystyle \beta > 0 \)
    & \emph{pd}\\
    & \(\displaystyle k_{p,p}(\Mat{X},\Mat{Y}) = \Big(\beta + \big\|\Mat{X}^T\Mat{Y} \big\|_F^2 \Big)^\alpha\)
    & \(\displaystyle \beta > 0 \)
    & \emph{pd}\\[3ex]
    \hline
    \multirow{2}{*}{RBF}      
    &\(\displaystyle k_{r,bc}(\Mat{X},\Mat{Y}) = \exp \Big(\beta \big|\det \big(\Mat{X}^T\Mat{Y} \big)\big| \Big)\)
    & \(\displaystyle \beta > 0 \)
    &\emph{pd}, universal\\[1ex]
    &\(\displaystyle k_{r,p}(\Mat{X},\Mat{Y}) = \exp \Big(\beta \big\|\Mat{X}^T\Mat{Y} \big\|_F^2 \Big)\)
    & \(\displaystyle \beta > 0 \)
    & \emph{pd}, universal\\[3ex]
    \hline
    \multirow{2}{*}{Laplace}      
    &\(\displaystyle k_{l,bc}(\Mat{X},\Mat{Y}) = \exp \bigg(-\beta \sqrt{1 - \big|\det (\Mat{X}^T\Mat{Y})\big|} \bigg)\)
    & \(\displaystyle \beta > 0 \)
    &\emph{pd}, universal\\[3ex]
    &\(\displaystyle k_{l,p}(\Mat{X},\Mat{Y}) = \exp \bigg(-\beta \sqrt{p - \big\|\Mat{X}^T\Mat{Y} \big\|_F^2} \bigg) \)
    & \(\displaystyle \beta > 0 \)
    & \emph{pd}, universal\\[3ex]
    \hline
    \multirow{2}{*}{Binomial}   
    & \(\displaystyle 	k_{bi,bc}(\Mat{X},\Mat{Y}) = \Big(\beta - \big|\det \big(\Mat{X}^T\Mat{Y} \big)\big|\Big)^{-\alpha} \)
    & \(\displaystyle \beta > 1 \)
    & \emph{pd}, universal\\[3ex]
    & \(\displaystyle k_{bi,p}(\Mat{X},\Mat{Y}) = \Big(\beta - \big\|\Mat{X}^T\Mat{Y} \big\|_F^2\Big)^{-\alpha} \)
    & \(\displaystyle \beta > p \)
    & \emph{pd}, universal\\
    \hline
    \multirow{2}{*}{Logarithm}   
    & \(\displaystyle k_{log,bc}(\Mat{X},\Mat{Y}) = -\log\bigg( 2 - \big|\det\big(\Mat{X}^T\Mat{Y} \big)\big| \bigg) \)
    & \(\displaystyle - \)
    & \emph{cpd} \\[3ex] 
    & \(\displaystyle k_{log,p}(\Mat{X},\Mat{Y}) = -\log\bigg( p + 1 - \big\|\Mat{X}^T\Mat{Y} \big\|_F^2 \bigg)\)
    & \(\displaystyle - \)
    & \emph{cpd} \\
    \bottomrule
  \end{tabular}
  }
}  
\end{table}

\section{Background Theory}
\label{sec:grassmannian}
%
In this section, we first review some notions of geometry of Grassmannians and then briefly discuss existing positive definite kernels and their properties.
Throughout the paper, we use bold capital letters to denote matrices (\eg, $\Mat{X}$) and bold lower-case letters to denote column vectors (\eg, $\Vec{x}$).
$\mathbf{I}_p$ is the $p \times p$ identity matrix.
$\Vert \Mat{X} \Vert_F = \sqrt{\tr \big(\Mat{X}^T\Mat{X}\big)}$ indicates the Frobenius norm,
with $\tr(\cdot)$ the matrix trace.

\subsection{Grassmannian Geometry}

The space of $p$-dimensional linear subspaces of $\mathbb{R}^d$ for $0<p<d$ is not a Euclidean space, but a Riemannian manifold known as the Grassmannian $\GRASS{d}{p}$~\cite{Absil:2008}.
We note that in the special case of $p = 1$, the Grassmann manifold becomes the projective space $\mathbb{P}^{d - 1}$, which consists of all lines passing through the origin.
A point on the Grassmann manifold~$\GRASS{p}{d}$ may be specified by an arbitrary $d \times p$ matrix with orthogonal columns, \ie, $\Mat{X} \in \GRASS{d}{p} \Rightarrow \Mat{X}^T\Mat{X} = \mathbf{I}_p$%
\footnote{A point on the Grassmannian $\GRASS{p}{d}$ is a subspace spanned by the columns of a $d \times p$ full rank matrix and should therefore be denoted by $\operatorname{span}(\Mat{X})$.
With a slight abuse of notation, here we call $\Mat{X}$ a Grassmannian point whenever it represents a basis for a subspace.}.

On a Riemannian manifold, points are connected via smooth curves. The distance between two points is
defined as the length of the shortest curve connecting them on the manifold. The shortest curve and
its length are called geodesic and geodesic distance, respectively. For the Grassmannian, the geodesic distance between 
two points $\Mat{X}$ and $\Mat{Y}$ is given by
\begin{equation}
	\delta_g(\Mat{X},\Mat{Y}) = \|\Theta\|_2\;,
	\label{eqn:geodesic_dist}
\end{equation}
where $\Theta$ is the vector of principal angles between $\Mat{X}$ and $\Mat{Y}$.

\begin{definition}[Principal Angles]
Let $\Mat{X}$ and $\Mat{Y}$ be two matrices of size $d \times p$ with orthonormal columns. The principal angles
$0 \leq \theta_1 $ $\leq \theta_2 \leq $ $\cdots $ $ \leq \theta_p \leq \pi/2$ between two subspaces
$\operatorname{span}(\Mat{X})$ and $\operatorname{span}(\Mat{Y})$ are defined recursively by
\end{definition}
\begin{eqnarray}
  &\cos(\theta_i)
  =
  \underset{\Vec{u}_i \in \operatorname{span}(\Mat{X})}{\max}\;
  \underset{\Vec{v}_i \in \operatorname{span}(\Mat{Y})}{\max}\;
  \Vec{u}_i^T \Vec{v}_i  \\
  \text{s.t.}
  &\|\Vec{u}_i\|_2 \mbox{~=~} \|\Vec{v}_i\|_2 \mbox{~=~} 1  \nonumber\\
  &\Vec{u}_i^T \Vec{u}_j \mbox{~=~} 0,\; j=1,2,\cdots,i-1     \nonumber\\
  &\Vec{v}_i^T \Vec{v}_j \mbox{~=~} 0,\; j=1,2,\cdots,i-1     \nonumber
  \label{eqn:Principal_Angle}
\end{eqnarray}%
In other words, the first principal angle $\theta_1$ is the smallest
angle between any two unit vectors in the first and the second subspaces. 
The cosines of the principal angles correspond to the singular values of \mbox{$\Mat{X}^T \Mat{Y}$}~\cite{Absil:2008}.
In addition to the geodesic distance, several other metrics can be employed to measure the similarity between Grassmannian points~\cite{Hamm:ICML:2008}. In Section~\ref{sec:embedding}, we will discuss two other metrics on the Grassmannian.

\subsection{Positive Definite Kernels and Grassmannians}

As mentioned earlier, a popular way to analyze problems defined on a Grassmannian is to embed the manifold into a Hilbert space
using a valid Grassmannian kernel. Let us now formally define Grassmannian kernels:

\begin{definition}[Real-valued Positive Definite Kernels]
Let $\mathcal{X}$ be a nonempty set. A symmetric function $k: \mathcal{X} \times \mathcal{X} \to \mathbb{R}$ is a positive definite (\emph{\textbf{pd}})
kernel on $\mathcal{X}$ if and only if
$\sum_{i,j=1}^nc_ic_jk(x_i,x_j) \geq 0$
for any $n \in \mathbb{N}$, $x_i \in \mathcal{X}$ and $c_i \in \mathbb{R}$. 
\end{definition}

\begin{definition}[Grassmannian Kernel]
A function $k:\GRASS{p}{d} \times \GRASS{p}{d} \to \mathbb{R}$ is a Grassmannian kernel if it is well-defined and \emph{pd}. 
In our context, a function is well-defined if it is invariant to the choice of basis, \ie, 
$k(\Mat{X}\Mat{R}_1,\Mat{Y}\Mat{R}_2) = k(\Mat{X},\Mat{Y})$, for all $\Mat{X}, \Mat{Y} \in \GRASS{d}{p}$ and
$\Mat{R}_1,\Mat{R}_2 \in \mathrm{SO}(p)$, where $\mathrm{SO}(p)$ denotes the special orthogonal group.
\end{definition}

The most widely used kernel is arguably the Gaussian or radial basis function (RBF) kernel.
It is therefore tempting to define a Radial Basis Grassmannian kernel by replacing the Euclidean distance with the geodesic distance.
Unfortunately, although symmetric and well-defined, the function $\exp(-\beta \delta_g^2(\cdot,\cdot))$ is not \emph{pd}.
This can be verified by a counter-example using the following points on $\GRASS{3}{2}$\footnote%
{Note that we rounded each value to its 4 most significant digits.}:%
{\scriptsize
\begin{align*}
\Mat{X}_1 = 
\begin{bmatrix}
1 &0\\
0 &1\\
0 &0
\end{bmatrix},~
\Mat{X}_2 = 
\begin{bmatrix}
-0.0996 &-0.3085\\
-0.4967 &-0.8084\\
-0.8622 &0.5014
\end{bmatrix},~
\Mat{X}_3 = 
\begin{bmatrix}
-0.9868 &0.1259\\
-0.1221 &-0.9916\\
-0.1065 &-0.0293
\end{bmatrix},~
\Mat{X}_4 = 
\begin{bmatrix}
0.1736 &0.0835\\
0.7116 &0.6782\\
0.6808 &-0.7301
\end{bmatrix}.
\end{align*}
}
The function $\exp(- \delta_g^2(\cdot,\cdot))$ for these points has a negative eigenvalue of $-0.0038$.

Nevertheless, two Grassmannian kernels, \ie, the Binet-Cauchy kernel~\cite{Wolf:JMLR:2003} and the projection kernel~\cite{Hamm:ICML:2008}, have been proposed to embed Grassmann manifolds into RKHS. The Binet-Cauchy and projection kernels are defined as
\begin{align}
	\label{eqn:bc_kernel}
	k_{bc}^2(\Mat{X},\Mat{Y}) &= {\det}\big(\Mat{X}^T\Mat{Y}\Mat{Y}^T\Mat{X}\big)\;,\\
	k_p(\Mat{X},\Mat{Y}) &= \big\|\Mat{X}^T\Mat{Y}\big\|_F^2\;.
	\label{eqn:proj_kernel}
\end{align}

\paragraph{\bf Property 1 (Relation to Principal Angles).}
Both $k_p$ and $k_{bc}$ are closely related to the principal angles between two subspaces.
Let $\theta_i$ be the $i^{th}$ principal angle between $\Mat{X},\Mat{Y} \in \GRASS{p}{d}$,
\ie, by SVD, $\Mat{X}^T\Mat{Y} = \Mat{U}\Gamma\Mat{V}^T$, with $\Gamma$ a diagonal matrix with elements $\cos\theta_i$. Then
\begin{align*}
	k_p(\Mat{X},\Mat{Y}) &= \big\|\Mat{X}^T\Mat{Y}\big\|_F^2 
	= \tr\left(\Mat{U}\Gamma\Mat{V}^T\Mat{V}\Gamma\Mat{U}^T\right) = \tr\left(\Gamma^2\right)
	=\sum_{i = 1}^p\limits \cos^2(\theta_i)\;.
\end{align*}%
Similarly, one can show that $k_{bc}^2(\Mat{X},\Mat{Y}) = \prod_{i = 1}^p\limits \cos^2(\theta_i)$.

\section{Embedding Grassmannians to Hilbert Spaces}
\label{sec:embedding}

While $k_p$ and $k_{bc}^2$ have been successfully employed to transform problems on Grassmannians to 
 Hilbert spaces~\cite{Hamm:ICML:2008,Harandi:CVPR:2011,Vemula:CVPR:2013}, 
the resulting Hilbert spaces themselves have received comparatively little attention. In this section, we aim to bridge this gap and study these two spaces, which can be explicitly computed. To this end, we discuss the two embeddings that define these Hilbert spaces, namely the Pl\"{u}cker embedding and the projection embedding. These embeddings, and their respective properties, will in turn help us devise our set of new Grassmannian kernels. 

\subsection{Pl\"{u}cker Embedding}

To study the Pl\"{u}cker embedding, we first need to review some concepts of exterior algebra.
\begin{definition}[Alternating Multilinear Map]
Let $\Mat{V}$ and $\Mat{W}$ be two vector spaces. A map $g:\underbrace{\Mat{V} \times \cdots \times \Mat{V}}_{k~copies} \to \Mat{W}$
is multilinear if it is linear in each slot, that is if
\begin{equation*}
g(\Vec{v}_1,\cdots,\lambda\Vec{v}_i + \lambda'\Vec{v}'_i,\cdots,\Vec{v}_k) = 
\lambda g(\Vec{v}_1,\cdots,\Vec{v}_i,\cdots,\Vec{v}_k) + 
\lambda' g(\Vec{v}_1,\cdots,\Vec{v}'_i,\cdots,\Vec{v}_k)\;.
\end{equation*} 
Furthermore, the map $g$ is alternating if, whenever two of the inputs to $g$ are the same vector, the output is 0.
That is, if $g(\cdots,\Vec{v},\cdots,\Vec{v},\cdots) = 0,~\forall \Vec{v}$.
\end{definition}

\begin{definition}[$k^{th}$ Exterior Product]
Let $\Mat{V}$ be a vector space. The $k^{th}$ exterior product of $\Mat{V}$, denoted by $\bigwedge^k\Mat{V}$ 
is a vector space, equipped with an alternating  multilinear map 
$g:\underbrace{\Mat{V} \times \cdots \times \Mat{V}}_{k~copies} \to \bigwedge^k\Mat{V}$ of the form 
$g(\Vec{v}_1,\cdots,\Vec{v}_k) = \Vec{v}_1 \wedge \cdots \wedge \Vec{v}_k$,
with $\wedge$ the wedge product.
\end{definition}
The wedge product is  supercommutative
and can be thought of as a generalization of the cross product in $\mathbb{R}^3$ to an arbitrary dimension.
Importantly, note that the $k^{th}$ exterior product $\bigwedge^k\Mat{V}$ is a vector space, that is
\begin{equation*}
\bigwedge^k\Mat{V} = \rm{span}\left(\{ \Vec{v}_1 \wedge \Vec{v}_2 \wedge \cdots \wedge \Vec{v}_k \}\right),\; \forall \Vec{v}_i \in \Mat{V}\;.
\end{equation*}
The Grassmannian $\GRASS{d}{p}$ can be embedded into the projective space $\mathbb{P}(\bigwedge^p\mathbb{R}^{d})$ as follows.
Let $\Mat{X} $ be a point on $\GRASS{p}{d}$ described by the basis $\{\Vec{x}_1 , \Vec{x}_2 , \cdots , \Vec{x}_p\}$, 
\ie, $\Mat{X} = \mathrm{span}\left(\{\Vec{x}_1 , \Vec{x}_2 , \cdots , \Vec{x}_p\}\right)$.
The Pl\"{u}cker map of $\Mat{X}$ is given by:
\begin{definition}[Pl\"{u}cker Embedding] 
The Pl\"{u}cker embedding $P:\GRASS{p}{d} \to \mathbb{P}(\bigwedge^p \mathbb{R}^d)$ is defined as
\begin{equation}
	P(\Mat{X}) = [\Vec{x}_1 \wedge \Vec{x}_2 \wedge \cdots \wedge \Vec{x}_p]\,,
	\label{eqn:plucker_emb}
\end{equation}
where $\Mat{X}$ is the subspace spanned by $\{\Vec{x}_1 , \Vec{x}_2 , \cdots , \Vec{x}_p\}$.
\end{definition}

\begin{example} 
Consider the space of two-dimensional planes in $\mathbb{R}^4$, \ie, $\GRASS{2}{4}$. In this space, an arbitrary subspace 
is described by its basis ${\mathbf B} = [\Vec{w}_1 | \Vec{w}_2]$. Let $\Vec{e}_i$ be the unit vector along the $i^{th}$ axis.
We can write $\Vec{w}_j = \sum_{i=1}^4 a_{j,i}\Vec{e}_i$. Then
\begin{align*}
P({\mathbf B}) &= \Big( \sum_{i=1}^4 a_{1,i}\Vec{e}_i \Big) \wedge \Big( \sum_{i=1}^4 a_{2,j}\Vec{e}_j \Big) \\
               &= (a_{1,1}a_{2,2} - a_{1,2}a_{2,1}) (\Vec{e}_1 \wedge \Vec{e}_2) +
               (a_{1,1}a_{2,3} - a_{1,3}a_{2,1}) (\Vec{e}_1 \wedge \Vec{e}_3) \\
               &+(a_{1,1}a_{2,4} - a_{1,4}a_{2,1}) (\Vec{e}_1 \wedge \Vec{e}_4) +
               (a_{1,2}a_{2,3} - a_{1,3}a_{2,2}) (\Vec{e}_2 \wedge \Vec{e}_3) \\
               &+(a_{1,2}a_{2,4} - a_{1,4}a_{2,2}) (\Vec{e}_2 \wedge \Vec{e}_4) +
               (a_{1,3}a_{2,4} - a_{1,4}a_{2,3}) (\Vec{e}_3 \wedge \Vec{e}_4)\;.                 
\end{align*}
Hence, the Pl\"{u}cker embedding of $\GRASS{2}{4}$ is a 6-dimensional space spanned by 
$\{\Vec{e}_1 \wedge \Vec{e}_2$, $\Vec{e}_1 \wedge \Vec{e}_3$, $\cdots$, $\Vec{e}_3 \wedge \Vec{e}_4\}$.
A closer look at the coordinates of the embedded subspace reveals that they are indeed the minors of all possible $2 \times 2$
submatrices of ${\mathbf B}$. This can be shown to hold for any $d$ and $p$. 
\end{example}

\begin{proposition}
The Pl\"{u}cker coordinates of $\Mat{X} \in \GRASS{d}{p}$ are the $p \times p$ minors of the matrix $\Mat{X}$ 
obtained by taking $p$ rows out of the $d$ possible ones.
\end{proposition}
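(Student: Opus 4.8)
The plan is to expand the wedge product $\Vec{x}_1 \wedge \cdots \wedge \Vec{x}_p$ defining $P(\Mat{X})$ directly in the standard basis of $\mathbb{R}^d$ and then read off the coordinates with respect to the natural basis of $\bigwedge^p \mathbb{R}^d$. Writing $\Mat{X} = [a_{i,j}]$, so that its $j$-th column is $\Vec{x}_j = \sum_{i=1}^d a_{i,j}\,\Vec{e}_i$ with $\{\Vec{e}_i\}_{i=1}^d$ the standard basis of $\mathbb{R}^d$, I would use multilinearity of the wedge product to rewrite $\Vec{x}_1 \wedge \cdots \wedge \Vec{x}_p$ as the sum over all index tuples $(i_1,\ldots,i_p) \in \{1,\ldots,d\}^p$ of $a_{i_1,1}\cdots a_{i_p,p}\,(\Vec{e}_{i_1} \wedge \cdots \wedge \Vec{e}_{i_p})$.

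Next I would invoke the alternating property: any term with a repeated index vanishes, so only tuples of \emph{distinct} indices survive. The monomials $\Vec{e}_{j_1} \wedge \cdots \wedge \Vec{e}_{j_p}$ with $j_1 < \cdots < j_p$, indexed by the $\binom{d}{p}$ subsets $J = \{j_1 < \cdots < j_p\}$ of $\{1,\ldots,d\}$ of size $p$, form a basis of $\bigwedge^p \mathbb{R}^d$; hence the coordinates of $P(\Mat{X})$ are naturally indexed by such subsets. The one bookkeeping step is to collect, for a fixed $J$, all surviving tuples, which are exactly the $p!$ permutations of $(j_1,\ldots,j_p)$, and to reorder each $\Vec{e}_{i_1} \wedge \cdots \wedge \Vec{e}_{i_p}$ into the canonical increasing order $\Vec{e}_{j_1} \wedge \cdots \wedge \Vec{e}_{j_p}$, which by supercommutativity of $\wedge$ introduces the sign of the corresponding permutation.

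Finally I would recognize the resulting coefficient of $\Vec{e}_{j_1} \wedge \cdots \wedge \Vec{e}_{j_p}$, namely $\sum_{\sigma \in S_p} \operatorname{sgn}(\sigma)\, a_{j_{\sigma(1)},1} \cdots a_{j_{\sigma(p)},p}$, as the Leibniz expansion of the determinant of the $p \times p$ submatrix $\Mat{X}_J$ of $\Mat{X}$ formed by the rows indexed by $J$. Thus each Pl\"ucker coordinate equals the $p \times p$ minor obtained by selecting $p$ of the $d$ rows, which is exactly the claim. I expect the only delicate point to be the sign bookkeeping when permuting the wedge of basis vectors into canonical order; everything else follows directly from multilinearity and the alternating property, and the Example in the case $(d,p)=(4,2)$ already exhibits the pattern underlying the general argument.
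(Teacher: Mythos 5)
Your argument is correct and is exactly the general version of the computation the paper carries out in its Example for $(d,p)=(4,2)$, which it then asserts without proof holds for all $d$ and $p$: expand by multilinearity, discard repeated indices by the alternating property, reorder with signs via supercommutativity, and recognize the Leibniz formula for the $p\times p$ minors. The sign bookkeeping you flag is handled correctly, so nothing is missing.
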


\begin{remark}
The space induced by the Pl\"{u}cker map of $\GRASS{p}{d}$ is $\begin{pmatrix} p\\d \end{pmatrix}$-dimensional.
\end{remark}

To be able to exploit the Pl\"{u}cker embedding to design new kernels, we need  to define an inner product over $\mathbb{P}(\bigwedge^p\mathbb{R}^{d})$. Importantly, to be meaningful, this inner product needs to be
invariant to the specific realization of a point on $\GRASS{p}{d}$ (recall that, \eg, swapping two columns of a specific realization $\Mat{X} \in \GRASS{d}{p}$ still corresponds to the same point on $\GRASS{d}{p}$). 
Furthermore, we would also like this inner product to be efficient to evaluate, thus avoiding the need to explicitly compute the high-dimensional embedding. Note in particular that, for vision applications, the dimensionality of $\mathbb{P}(\bigwedge^p\mathbb{R}^{d})$ becomes overwhelming and hence explicitly computing the embedding is impractical. To achieve these goals, we rely on the following definition and theorem:
\begin{definition}[Compound Matrices] 
Given a $d \times p$ matrix $\Mat{A}$, the matrix whose elements are the
minors of $\Mat{A}$ of order $q$ arranged in a lexicographic
order is called the $q^{th}$ compound of $\Mat{A}$, and
is denoted by $\Mat{C}_q(\Mat{A})$.
\end{definition}
\begin{theorem}[Binet-Cauchy Theorem]
Let $\Mat{A}$ and $\Mat{B}$ be two rectangular matrices of size $d \times p_1$ and $d \times p_2$,
respectively. Then,
$\Mat{C}_q(\Mat{A}^T\Mat{B}) = \Mat{C}_q(\Mat{A})^T\Mat{C}_q(\Mat{B})$.
\end{theorem}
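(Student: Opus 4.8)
The plan is to realize the compound operator $\Mat{C}_q(\cdot)$ as the coordinate matrix of the linear map induced on $q$th exterior powers, so that the identity becomes a consequence of the functoriality of $\bigwedge^q$. Concretely, I regard $\Mat{B}$ as the linear map $\mathbb{R}^{p_2}\to\mathbb{R}^d$ given by $\Vec{v}\mapsto\Mat{B}\Vec{v}$ and $\Mat{A}^T$ as the map $\mathbb{R}^d\to\mathbb{R}^{p_1}$, so that $\Mat{A}^T\Mat{B}:\mathbb{R}^{p_2}\to\mathbb{R}^{p_1}$ is their composition. Each such map $T$ induces a map $\bigwedge^q T$ on $q$th exterior powers via $(\bigwedge^q T)(\Vec{v}_1\wedge\cdots\wedge\Vec{v}_q)=(T\Vec{v}_1)\wedge\cdots\wedge(T\Vec{v}_q)$, extended linearly; a short check that this assignment respects the alternating relations shows it is well-defined.

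The first key step is to identify, for any linear map $T$ with matrix $\Mat{M}=(m_{ij})$, the matrix of $\bigwedge^q T$ in the bases formed by the lexicographically ordered wedges of standard basis vectors. Writing $T\Vec{e}_j=\sum_i m_{ij}\Vec{f}_i$ and expanding
\begin{equation*}
(\bigwedge^q T)(\Vec{e}_{j_1}\wedge\cdots\wedge\Vec{e}_{j_q}) = \Big(\sum_{i}m_{i j_1}\Vec{f}_i\Big)\wedge\cdots\wedge\Big(\sum_i m_{i j_q}\Vec{f}_i\Big)\;,
\end{equation*}
the supercommutativity of $\wedge$ annihilates all terms with a repeated index and collects the remaining ones, so that the coefficient of $\Vec{f}_{i_1}\wedge\cdots\wedge\Vec{f}_{i_q}$ (with $i_1<\cdots<i_q$) is exactly the $q\times q$ minor $\det\big(m_{i_a j_b}\big)_{a,b}$. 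Hence the matrix of $\bigwedge^q T$ is precisely $\Mat{C}_q(\Mat{M})$. I expect this minor-expansion to be the main obstacle: it is the combinatorial heart of the theorem and requires carefully tracking the signs produced when the wedge factors are reordered into increasing index order, which is equivalent to recovering the Leibniz (permutation) formula for the determinant.

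With this identification in hand, two easy steps finish the argument. Functoriality of the exterior power, $\bigwedge^q(S\circ T)=(\bigwedge^q S)\circ(\bigwedge^q T)$, is immediate from the defining formula on decomposable elements and yields $\Mat{C}_q(\Mat{A}^T\Mat{B})=\Mat{C}_q(\Mat{A}^T)\,\Mat{C}_q(\Mat{B})$. It then remains to verify $\Mat{C}_q(\Mat{A}^T)=\Mat{C}_q(\Mat{A})^T$, which holds because the minor of $\Mat{A}^T$ on rows $I$ and columns $J$ equals $\det\big((\Mat{A}_{J,I})^T\big)=\det(\Mat{A}_{J,I})$, i.e.\ the $(J,I)$ minor of $\Mat{A}$; transposition of the index pair is exactly matrix transposition of the compound. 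Combining the two gives $\Mat{C}_q(\Mat{A}^T\Mat{B})=\Mat{C}_q(\Mat{A})^T\Mat{C}_q(\Mat{B})$, as claimed. As an alternative route that avoids exterior algebra, one may instead expand each entry $\det\big((\Mat{A}^T\Mat{B})_{I,J}\big)$ directly through the classical Cauchy-Binet determinant formula and recognize the resulting sum over intermediate $q$-subsets $K\subseteq\{1,\dots,d\}$ as the $(I,J)$ entry of the matrix product $\Mat{C}_q(\Mat{A})^T\Mat{C}_q(\Mat{B})$.
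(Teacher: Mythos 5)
Your proof is correct. Note, however, that the paper itself offers no proof of this statement: it is quoted as the classical Binet--Cauchy theorem on compound matrices and used as a black box to compute $\tr\big(\Mat{C}_p(\Mat{X})^T\Mat{C}_p(\Mat{Y})\big)=\det\big(\Mat{X}^T\Mat{Y}\big)$, so there is nothing to compare your argument against. On its own merits, your argument is the standard and arguably most conceptual one, and it fits naturally with the exterior-algebra machinery the paper already sets up (alternating multilinear maps, wedge products): the identification of $\Mat{C}_q(\Mat{M})$ with the matrix of $\bigwedge^q T$ in the lexicographically ordered wedge bases is carried out correctly (the coefficient extraction does reduce to the Leibniz formula for the $q\times q$ minor), functoriality of $\bigwedge^q$ immediately gives $\Mat{C}_q(\Mat{A}^T\Mat{B})=\Mat{C}_q(\Mat{A}^T)\Mat{C}_q(\Mat{B})$, and the observation $\Mat{C}_q(\Mat{A}^T)=\Mat{C}_q(\Mat{A})^T$ is exactly right. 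Two minor points: you may want to note the degenerate case $q>d$, where $\bigwedge^q\mathbb{R}^d=0$ and both sides vanish, so the identity holds trivially; and your ``alternative route'' via the entrywise Cauchy--Binet determinant formula is really a restatement of the theorem entry by entry (the $(I,J)$ entry of $\Mat{C}_q(\Mat{A})^T\Mat{C}_q(\Mat{B})$ is $\sum_{K}\det(\Mat{A}_{K,I})\det(\Mat{B}_{K,J})$), so it only counts as an independent proof if one proves the scalar Cauchy--Binet formula separately rather than citing it.
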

\noindent
Therefore, for $\Mat{X}, \Mat{Y} \in \mathbb{R}^{d \times p}$, 
we have $\tr\Big(\Mat{C}_p(\Mat{X})^T\Mat{C}_p(\Mat{Y})\Big) 
= \tr\Big(\Mat{C}_p(\Mat{X}^T\Mat{Y})\Big) = \det \big( \Mat{X}^T\Mat{Y} \big)$. 

Since, for $\Mat{X} \in \GRASS{p}{d}$, $\Mat{C}_p(\Mat{X})$ stores all $p \times p$ minors and hence conveys the Pl\"{u}cker coordinates of $\Mat{X}$, this would suggest defining the inner product for the Pl\"{u}cker embedding as $\det ( \Mat{X}^T\Mat{Y} \big)$. This is indeed what was proposed in~\cite{Hamm:ICML:2008,Wolf:JMLR:2003} where $\det (\cdot)$ was used as a linear kernel. However, while $\det ( \Mat{X}^T\Mat{Y} \big)$ is invariant to the action of $\mathrm{SO}(p)$, it is not invariant to the specific realization of a subspace. This can be simply verified by permuting the columns of $\Mat{X}$, which does not change the subspace, but may change the sign of $\det (\cdot )$. Note that this sign issue was also observed by Wolf~\etal~\cite{Wolf:JMLR:2003}. However, this problem was circumvented by considering the second-order polynomial kernel $k^2_{bc}$.

In contrast, here, we focus on designing a valid inner product that satisfies this invariance condition. To this end, we define the inner product in $\mathbb{P}(\bigwedge^p\mathbb{R}^{d})$ as
$\langle\Mat{X},\Mat{Y}\rangle_{P} = |P(\Mat{X})^TP(\Mat{Y})| = \big|\det \big(\Mat{X}^T\Mat{Y}\big)\big|$. This inner product induces the distance
\begin{equation}
	\delta_{bc}^2(\Mat{X},\Mat{Y}) = \|P(\Mat{X}) - P(\Mat{Y})\|^2 = 2 - 2\big|\det\big(\Mat{X}^T\Mat{Y}\big)\big|\;.
	\label{eqn:plucker_dist}
\end{equation}
Clearly, if $\{\theta_i\}_{i=1}^p$ is the set of principal angles between two Grassmannian points $\Mat{X}$ and $\Mat{Y}$, then  
$\langle\Mat{X},\Mat{Y}\rangle_{P} = \prod\nolimits_{i = 1}^p \cos(\theta_i)$, which is invariant to the specific realization of a subspace since $0 \leq \theta_i \leq \pi/2$.

In the following, we show that the Pl\"{u}cker embedding has the nice property of being closely related to the true geometry of the corresponding Grassmannian:
\begin{theorem}[Curve Length Equivalence]
The length of any given curve is the same under $\delta_{bc}$ and $\delta_g$ up to a scale of $\sqrt{2}$.
\end{theorem}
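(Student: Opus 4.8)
The plan is to reduce the statement about curve lengths to a purely infinitesimal comparison between the two distances, and then to extract the proportionality constant from a local, second-order expansion in the principal angles. The length of a curve $\gamma:[0,1]\to\GRASS{p}{d}$ under a distance $\delta$ is the supremum over partitions $0=t_0<\cdots<t_N=1$ of $\sum_k \delta(\gamma(t_{k-1}),\gamma(t_k))$; for a smooth curve this equals $\int_0^1 v_\delta(t)\,dt$, where the $\delta$-speed $v_\delta(t)=\lim_{s\to0}\delta(\gamma(t),\gamma(t+s))/|s|$ is the length element induced by $\delta$. Since $v_{\delta_g}(t)=\|\dot\gamma(t)\|$ is exactly the canonical Grassmann norm of the velocity, it suffices to prove a pointwise identity $v_{\delta_{bc}}(t)=\sqrt{2}\,v_{\delta_g}(t)$; integrating then gives $L_{\delta_{bc}}(\gamma)=\sqrt{2}\,L_{\delta_g}(\gamma)$ for every curve, which is the claim.

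First I would set up the local comparison. Fix $t$ and let $s\to0$. Writing the velocity $\dot\gamma(t)$ (a horizontal $d\times p$ matrix with $\gamma(t)^T\dot\gamma(t)=0$) in its singular value decomposition with singular values $\sigma_1,\dots,\sigma_p$, the principal angles between $\gamma(t)$ and $\gamma(t+s)$ satisfy $\theta_i(s)=|s|\,\sigma_i+O(s^2)$, with $\sum_i\sigma_i^2=\|\dot\gamma(t)\|^2=v_{\delta_g}(t)^2$. By Property~1, $\langle\gamma(t),\gamma(t+s)\rangle_P=\prod_i\cos\theta_i(s)$, so from \eqref{eqn:plucker_dist} we get $\delta_{bc}(\gamma(t),\gamma(t+s))^2=2-2\prod_i\cos\theta_i(s)$. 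Expanding each cosine to second order reduces the problem to reading off the leading quadratic form in the $\theta_i$, which is proportional to $\sum_i\theta_i^2=\|\Theta\|_2^2$; taking the square root and dividing by $|s|$ exhibits $v_{\delta_{bc}}(t)$ as a fixed multiple of $\sqrt{\sum_i\sigma_i^2}=v_{\delta_g}(t)$.

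The crux, and the step I expect to be most delicate, is the exact evaluation of this leading coefficient, since it is precisely what fixes the scale factor. The expansion of $2-2\prod_i\cos\theta_i$ must be carried out carefully: the cross terms $\theta_i^2\theta_j^2$ are higher order and drop out, but the factor bookkeeping is exactly where sign and constant errors creep in. I would cross-check the resulting normalization against an independent computation of the companion projection embedding distance $\|\gamma(t)\gamma(t)^T-\gamma(t+s)\gamma(t+s)^T\|_F^2=2\sum_i\sin^2\theta_i$, which follows from $k_p$ in Property~1, to make sure the quadratic form and hence the stated scale are consistent.

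The remaining, more routine obstacle is to make the passage from the pointwise speed identity to an equality of curve lengths fully rigorous. For this I would (i) take $\gamma$ smooth, or at least rectifiable, and restrict to a compact interval so that the $O(s^2)$ remainders in the principal-angle expansion are uniform in $t$; (ii) argue that both length functionals are finite and equal to the integral of their speeds, which holds because all three distances are locally bi-Lipschitz to $\delta_g$, so rectifiability is the same notion for each; and (iii) use the uniform convergence of the difference quotients along the compact curve to interchange limit and integral, yielding $L_{\delta_{bc}}(\gamma)=\sqrt{2}\,L_{\delta_g}(\gamma)$. A cleaner packaging of the entire argument is via pullback metrics: showing that the Riemannian metric induced on $\GRASS{p}{d}$ by the embedding is a constant multiple of the canonical one immediately forces all curve lengths to scale by the corresponding constant, bypassing the partition-supremum definition altogether.
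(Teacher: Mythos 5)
Your overall strategy coincides with the paper's: both arguments reduce the curve-length claim to the asymptotic ratio of the two distances as the points coalesce, expand that ratio in the principal angles, and then invoke a principle converting a uniform limit $d_2/d_1\to\xi$ into equality of curve lengths up to $\xi$. The paper packages the last step via two cited theorems on intrinsic metrics from Hartley \etal; your integration-of-speeds argument with uniform remainders on a compact curve is an equivalent, slightly more hands-on version of the same thing, and the auxiliary points you raise (SVD of the horizontal velocity, $\theta_i(s)=|s|\,\sigma_i+O(s^2)$, local bi-Lipschitz equivalence ensuring the two notions of rectifiability agree) are all sound.

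The genuine gap sits exactly at the step you flag as the crux and then defer: the value of the leading coefficient. With the definition used in the paper and in your own setup, $\delta_{bc}^2(\Mat{X},\Mat{Y}) = 2-2\big|\det(\Mat{X}^T\Mat{Y})\big| = 2-2\prod_i\cos\theta_i$, the expansion $\cos\theta_i = 1-\theta_i^2/2+O(\theta_i^4)$ gives
\begin{equation*}
2-2\prod_i\cos\theta_i \;=\; \sum_i\theta_i^2 + O\big(\|\Theta\|^4\big),
\end{equation*}
so $\delta_{bc}^2/\delta_g^2\to 1$ and the scale factor coming out of your computation would be $1$, not $\sqrt{2}$. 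The paper's appendix obtains the limit $2$ only because it expands $2-2\prod_i(1-\sin^2\theta_i)=2-2\prod_i\cos^2\theta_i$, i.e., it works with $\prod_i\cos^2\theta_i$ (the quantity attached to the kernel $k_{bc}^2$) rather than with $\prod_i\cos\theta_i$ (the quantity attached to the inner product $\langle\cdot,\cdot\rangle_P$ that defines $\delta_{bc}$). You cannot reach $\sqrt{2}$ from the expression you wrote down; to close the proof you must fix which normalization of $\delta_{bc}$ the theorem refers to and carry the second-order expansion through for that one. Your proposed cross-check against $\|\Pi(\Mat{X})-\Pi(\Mat{Y})\|_F^2=2\sum_i\sin^2\theta_i$ will not resolve this, since it concerns the projection embedding and is consistent with either convention for the Pl\"ucker side.
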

\begin{proof}
Given in appendix.\qed
\end{proof}

\subsection{Projection Embedding}

We now turn to the case of the projection embedding. Note that this embedding has been better studied than the Pl\"ucker one~\cite{Helmke-Grassmann}.

\begin{definition}[Projection Embedding]
The projection embedding $\Pi: \GRASS{p}{d} \to \rm{Sym}(d)$ is defined as
\begin{equation}
	\Pi(\Mat{X}) = \Mat{X}\Mat{X}^T\,.
	\label{eqn:projection_emb}
\end{equation}
\end{definition}

The projection embedding $\Pi(\cdot)$ is a diffeomorphism from a Grassmann manifold onto the idempotent symmetric matrices
of rank $p$, \ie, it is a one-to-one, continuous, differentiable mapping with a continuous, differentiable inverse~\cite{Chikuse:2003}.
The space induced by this embedding is a smooth, compact submanifold of $\rm{Sym}(d)$ of dimension $d(d - p)$.
Since $\Pi(\Mat{X})$ is a symmetric $d \times d$ matrix, a natural choice of inner product is $\langle \Mat{X},\Mat{Y} \rangle_\Pi = \tr \big(\Pi(\Mat{X})^T\Pi(\Mat{Y})\big) = \big\|\Mat{X}\Mat{Y}\big\|_F^2$. This inner product can be shown to be invariant to the specific realization of a subspace, and induces the distance
\begin{equation*}
	\delta_{p}^2(\Mat{X},\Mat{Y}) = \big\|\Pi(\Mat{X}) - \Pi(\Mat{Y}) \big\|_F^2 = 2p - 2\big\|\Mat{X}^T\Mat{Y} \big\|_F^2\;.
\end{equation*}  

Due to space limitation, we do not discuss the properties of the projection embedding, such as isometry~\cite{Chikuse:2003} and length of curves~\cite{Harandi:ICCV:2013}. We refer the reader to~\cite{Helmke-Grassmann} for a more thorough discussion of the projection embedding.

\section{Grassmannian Kernels}
\label{sec:sec_kernels}

From the discussion in Section~\ref{sec:embedding}, $k^2_{bc}$ and $k_p$, defined
in Eq.~\ref{eqn:proj_kernel} and Eq.~\ref{eqn:bc_kernel}, can be seen to correspond to a homogeneous second order polynomial kernel in the space induced by the Pl\"ucker embedding
 and to a linear kernel in the space induced by the projection embedding, respectively. In this section, we show that the inner products that we defined in Section~\ref{sec:embedding} for the Pl\"{u}cker and projection embeddings can actually be exploited to derive many new Grassmannian kernels, including universal kernels and conditionally positive definite kernels. In the following, we denote by $k_{\cdot,bc}$ and $k_{\cdot,p}$ kernels derived from the Pl\"ucker embedding (Binet-Cauchy kernels) and from the projection embedding, respectively.

\subsection{Polynomial Kernels}

Given an inner product, which itself defines a valid linear kernel, the most straightforward way to create new kernels is to consider higher degree polynomials. Such polynomial kernels are known to be {\it pd}.
Therefore, we can readily define polynomial kernels on the Grassmannian as
\begin{align}
	k_{p,bc}(\Mat{X},\Mat{Y}) &= \Big(\beta + \big|\det\big(\Mat{X}^T\Mat{Y} \big)\big| \Big)^\alpha~,~~\beta > 0\;, \\
	k_{p,p}(\Mat{X},\Mat{Y}) &= \Big(\beta + \big\|\Mat{X}^T\Mat{Y} \big\|_F^2 \Big)^\alpha~,~~\beta > 0\;.
	\label{eqn:poly_grassmann}
\end{align}
Note that the kernel used in~\cite{Wolf:JMLR:2003} is indeed the homogeneous second order $k_{p,bc}$ with $\alpha = 2$ and $\beta = 0$. 

\subsection{Universal Grassmannian Kernels}
\label{subsec:universal_kernel}
Although often used in practice, polynomial kernels are known not to be universal~\cite{Steinwart:Book}. This can have a crucial impact on their representation power for a specific task. Indeed, from the \emph{Representer Theorem}~\cite{Scholkopf:COLT:2001}, we have that, for a given set of training data $\{x_j\},\;j \in \mathbb{N}_n$, $\mathbb{N}_n = \{1,2,\cdots,n\}$ and a {\it pd} kernel $k$, the function learned by any algorithm can be expressed as
\begin{equation}
	\hat{f}(x_*) = \sum_{j \in \mathbb{N}_n}c_jk(x_*,x_j)\;.
	\label{eqn:representer}
\end{equation}
Importantly, only {\it universal kernels} have the property of being able to approximate any target function $f_t$ arbitrarily well given sufficiently many training samples. Therefore, $k_p$ and $k^2_{bc}$ may not generalize sufficiently well for certain problems.
In the following, we develop several universal Grassmannian kernels. To this end, we make use of negative definite kernels and of their 
relation to \emph{pd} ones. Let us first formally define negative definite kernels.

\begin{definition}[Real-valued Negative Definite Kernels]
Let $\mathcal{X}$ be a nonempty set. A symmetric function $\psi: \mathcal{X} \times \mathcal{X} \to \mathbb{R}$ is a negative definite (\emph{\textbf{nd}})
kernel on $\mathcal{X}$ if and only if $\sum_{i,j=1}^nc_ic_jk(x_i,x_j) \leq 0$ for any $n \in \mathbb{N}$, $x_i \in \mathcal{X}$ and $c_i \in \mathbb{R}$ with $\sum_{i=1}^n c_i = 0$.
\end{definition}
Note that, in contrast to positive definite kernels, an additional constraint of the form $\sum c_i = 0$ is required in the negative definite case. 

The most important example of \emph{nd} kernels is the distance function defined on a Hilbert space. More specifically:

\begin{theorem}[\cite{Sadeep:CVPR:2013}] \label{thm:neg_def_dist}
Let $\mathcal{X}$ be a nonempty set, $\mathcal{H}$ be an inner product space, and $\psi : \mathcal{X} \to \mathcal{H}$ be a function. Then
$f : (\mathcal{X} \times \mathcal{X}) \to \mathbb{R}$ defined by $f(x_i, x_j) = {\| \psi(x_i) - \psi(x_j) \|}^2_{\mathcal{H}}$ is negative definite.
\end{theorem}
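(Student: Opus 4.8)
The plan is to verify directly the two requirements in the definition of a negative definite kernel: symmetry of $f$, and the sign condition on the quadratic form $\sum_{i,j=1}^n c_i c_j f(x_i,x_j)$ under the constraint $\sum_{i=1}^n c_i = 0$. Symmetry is immediate, since $\|\psi(x_i)-\psi(x_j)\|^2_{\mathcal{H}} = \|\psi(x_j)-\psi(x_i)\|^2_{\mathcal{H}}$, so the substance of the argument is the inequality.

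First I would expand the squared norm using the inner product structure of $\mathcal{H}$, writing $f(x_i,x_j) = \|\psi(x_i)\|^2_{\mathcal{H}} - 2\langle \psi(x_i),\psi(x_j)\rangle_{\mathcal{H}} + \|\psi(x_j)\|^2_{\mathcal{H}}$. Substituting this into the double sum $\sum_{i,j=1}^n c_i c_j f(x_i,x_j)$ splits it into three groups of terms, which I would then treat separately.

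The key step is to exploit the constraint $\sum_i c_i = 0$ to annihilate two of these groups. The group $\sum_{i,j} c_i c_j \|\psi(x_i)\|^2_{\mathcal{H}}$ factors as $\big(\sum_i c_i \|\psi(x_i)\|^2_{\mathcal{H}}\big)\big(\sum_j c_j\big)$, which vanishes because the second factor is zero; the group involving $\|\psi(x_j)\|^2_{\mathcal{H}}$ vanishes by the identical argument. What remains is the cross term $-2\sum_{i,j} c_i c_j \langle \psi(x_i),\psi(x_j)\rangle_{\mathcal{H}}$, which by bilinearity of the inner product collapses to $-2\big\langle \sum_i c_i \psi(x_i),\, \sum_j c_j \psi(x_j)\big\rangle_{\mathcal{H}} = -2\big\|\sum_i c_i \psi(x_i)\big\|^2_{\mathcal{H}}$. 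This quantity is manifestly nonpositive, so $\sum_{i,j} c_i c_j f(x_i,x_j) \le 0$, which is exactly the defining inequality.

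There is no substantial obstacle here; the only point requiring care is that the constraint $\sum_i c_i = 0$ is essential, since it is precisely what eliminates the two norm-squared groups, and the conclusion would fail without it. This is also the structural reason the definition of negative definiteness carries that extra constraint, in contrast to the positive definite case.
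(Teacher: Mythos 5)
Your proof is correct. The paper itself does not reproduce a proof of this theorem---it is quoted from the cited reference---but your argument (expand the squared norm, use $\sum_i c_i = 0$ to kill the two norm-squared groups, and observe that the remaining cross term equals $-2\bigl\|\sum_i c_i\psi(x_i)\bigr\|^2_{\mathcal{H}} \le 0$) is exactly the standard proof given in that reference and in the literature on negative definite kernels.
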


Therefore, being distances in Hilbert spaces, both $\delta_{bc}^2$ and $\delta_p^2$ are \emph{nd} kernels. We now state an important theorem which establishes
the relation between \emph{pd} and \emph{nd} kernels.

\begin{theorem}[Theorem 2.3 in Chapter 3 of~\cite{Berg:1984}] \label{thm:laplace_kernel}
Let $\mu$ be a probability measure on the half line $\mathbb{R}_{+}$ and  
$0 < \int_{0}^{\infty}t\mathrm{d}\mu(t) < \infty$. Let $\mathcal{L}_{\mu}$ be the Laplace transform of $\mu$, \ie, 
$\mathcal{L}_\mu(s) = \int_{0}^{\infty}e^{-ts}\mathrm{d}\mu(t),\; s \in \mathbb{C}_{+}$. Then, $\mathcal{L}_\mu(\beta f)$
is positive definite for all $\beta > 0$  if and only if $f:\mathcal{X} \times \mathcal{X} \to \mathbb{R}_{+}$  is negative definite.
\end{theorem}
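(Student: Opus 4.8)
The plan is to prove the two implications separately, using as the key bridge Schoenberg's classical theorem (also contained in~\cite{Berg:1984}), which asserts that $f$ is negative definite if and only if the kernel $(x_i,x_j)\mapsto e^{-sf(x_i,x_j)}$ is positive definite for every $s \geq 0$, together with the elementary fact that an integral mixture of \emph{pd} kernels against a positive measure is again \emph{pd}. Throughout I may scale freely since $\beta f$ is negative definite whenever $f$ is and $\beta>0$.

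For the \emph{if} direction I would assume $f$ is negative definite and write $\mathcal{L}_\mu(\beta f)(x_i,x_j) = \int_0^\infty e^{-t\beta f(x_i,x_j)}\,\mathrm{d}\mu(t)$. By Schoenberg's theorem, for each fixed $t\geq 0$ and $\beta>0$ the kernel $(x_i,x_j)\mapsto e^{-t\beta f(x_i,x_j)}$ is positive definite. Then, for any points $x_1,\dots,x_n$ and coefficients $c_1,\dots,c_n\in\mathbb{R}$, interchanging the finite sum with the integral gives $\sum_{i,j}c_ic_j\,\mathcal{L}_\mu(\beta f(x_i,x_j)) = \int_0^\infty \big(\sum_{i,j}c_ic_j\,e^{-t\beta f(x_i,x_j)}\big)\,\mathrm{d}\mu(t)\geq 0$, because the inner sum is nonnegative for every $t$ and $\mu$ is a positive measure. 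Hence $\mathcal{L}_\mu(\beta f)$ is \emph{pd} for all $\beta>0$.

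For the \emph{only if} direction I would assume $\mathcal{L}_\mu(\beta f)$ is \emph{pd} for all $\beta>0$ and recover negative definiteness of $f$ by differentiating at $\beta=0$. Fix points and coefficients subject to the constraint $\sum_i c_i=0$. Since $\mu$ is a probability measure, $\mathcal{L}_\mu(0)=1$, and the constraint annihilates the constant term: $\sum_{i,j}c_ic_j=(\sum_i c_i)^2=0$. Thus $\sum_{i,j}c_ic_j\,\mathcal{L}_\mu(\beta f(x_i,x_j)) = \sum_{i,j}c_ic_j\big[\mathcal{L}_\mu(\beta f(x_i,x_j))-1\big]\geq 0$ by hypothesis. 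Dividing by $\beta>0$ and letting $\beta\to 0^+$, I would use the bound $|e^{-t\beta s}-1|\leq t\beta s$ (valid for $s,t,\beta\geq 0$) to dominate the integrand by $ts$, which is $\mu$-integrable precisely because $\int_0^\infty t\,\mathrm{d}\mu(t)<\infty$; dominated convergence then yields $\lim_{\beta\to 0^+}\beta^{-1}\big(\mathcal{L}_\mu(\beta s)-1\big) = -s\int_0^\infty t\,\mathrm{d}\mu(t) = -m_1 s$, with $m_1\in(0,\infty)$. Passing to the limit in the finite sum gives $-m_1\sum_{i,j}c_ic_j\,f(x_i,x_j)\geq 0$, and dividing by $m_1>0$ delivers $\sum_{i,j}c_ic_j\,f(x_i,x_j)\leq 0$, i.e., $f$ is negative definite.

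The main obstacle is the \emph{only if} direction, specifically justifying the differentiation of $\mathcal{L}_\mu$ at the origin and the sign-preserving passage to the limit. This is exactly where the hypothesis $0<\int_0^\infty t\,\mathrm{d}\mu(t)<\infty$ is indispensable: finiteness of the first moment supplies the integrable dominating function $ts$ needed for dominated convergence (and hence for the existence of the one-sided derivative), while strict positivity of $m_1$ guarantees that dividing out the factor $-m_1$ is legitimate and correctly reverses the inequality. The interchange of sum and integral in both directions is harmless since the sums are finite.
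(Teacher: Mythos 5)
The paper does not actually prove this statement: it is quoted verbatim as Theorem~2.3 in Chapter~3 of~\cite{Berg:1984} and used as a black box, with no argument given in the text or the appendix. Your proof is correct and is, in substance, the textbook argument from that reference: sufficiency via Schoenberg's theorem plus the fact that a $\mu$-mixture of \emph{pd} kernels is \emph{pd} (the interchange of the finite sum with the integral being harmless since $|e^{-t\beta f}|\le 1$), and necessity by exploiting $\sum_{i,j}c_ic_j=0$ to subtract the constant $1$, dividing by $\beta$, and invoking dominated convergence with the dominating function $ts$ supplied exactly by the first-moment hypothesis, so that the limit produces $-m_1\sum_{i,j}c_ic_jf(x_i,x_j)\ge 0$ with $m_1>0$. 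Both places where the hypothesis $0<\int_0^\infty t\,\mathrm{d}\mu(t)<\infty$ enters are correctly identified. One cosmetic point you leave implicit: negative definiteness requires $f$ to be symmetric, which in the \emph{only if} direction must be deduced rather than assumed; it does follow, because positivity of the first moment makes $\mathcal{L}_\mu$ strictly decreasing, hence injective on $\mathbb{R}_+$, so symmetry of the \emph{pd} kernel $\mathcal{L}_\mu(\beta f)$ forces $f(x,y)=f(y,x)$. With that one-line addition your argument is a complete, self-contained proof of the cited result.
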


The problem of designing a \emph{pd} kernel on the Grassmannian can now be cast as that of finding an appropriate probability measure $\mu$.
Below, we show that this lets us reformulate popular kernels in Euclidean space as Grassmannian kernels.

\subsubsection{RBF Kernels.}
Grassmannian RBF kernels can be obtained by choosing $\mu(t) = \delta(t-1)$ in Theorem~\ref{thm:laplace_kernel}, where $\delta(t)$ is the Dirac delta function. This choice yields the Grassmannian RBF kernels (after discarding scalar constants)
\begin{align}
	k_{r,bc}(\Mat{X},\Mat{Y}) &= \exp \Big(\beta \big|\det \big(\Mat{X}^T\Mat{Y} \big)\big| \Big)~,~~\beta > 0\;, \\
	k_{r,p}(\Mat{X},\Mat{Y}) &= \exp \Big(\beta \big\|\Mat{X}^T\Mat{Y} \big\|_F^2 \Big)~,~~\beta > 0\;.
	\label{eqn:rbf_grassmann}
\end{align}
Note that the RBF kernel obtained from the projection embedding, \ie $k_{r,p}$, was also used by Vemulapalli 
\etal~\cite{Vemula:CVPR:2013}. However, the positive definiteness of this kernel was neither proven nor discussed.

\subsubsection{Laplace Kernels.}

The Laplace kernel is another widely used Euclidean kernel, defined as $k(\Vec{x},\Vec{y}) = \exp(-\beta\|\Vec{x} - \Vec{y}\|)$.
To obtain Laplace kernels on the Grassmannian, we make use of the following theorem for \emph{nd} kernels.

\begin{theorem}[Corollary 2.10 in Chapter 3 of~\cite{Berg:1984}] \label{thm:power_nd}
If $\psi:\mathcal{X} \times \mathcal{X} \to \mathbb{R}$ is negative definite and satisfies $\psi(\Vec{x},\Vec{x}) \geqq 0$
then so is $\psi^\alpha$ for $0 < \alpha < 1$.
\end{theorem}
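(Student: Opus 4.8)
The plan is to reduce the statement to an integral representation of the power function $t \mapsto t^\alpha$ and then to integrate a suitable family of negative definite kernels. Before anything else, I would record that the hypotheses already force $\psi \geq 0$ everywhere, so that $\psi^\alpha$ is a well-defined real-valued function: applying the defining inequality of negative definiteness to the two-point set $\{x,y\}$ with weights $c_1 = 1,\ c_2 = -1$ (which satisfy $\sum_i c_i = 0$) yields $\psi(x,x) + \psi(y,y) - 2\psi(x,y) \leq 0$, and since $\psi(x,x),\psi(y,y) \geq 0$ this gives $\psi(x,y) \geq 0$.

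The key analytic ingredient is the representation, valid for $0 < \alpha < 1$,
\[
t^\alpha = \frac{1}{C_\alpha}\int_0^\infty \big(1 - e^{-st}\big)\, s^{-1-\alpha}\, ds,\qquad t \geq 0,
\]
where $C_\alpha = \int_0^\infty (1 - e^{-u})\, u^{-1-\alpha}\, du$ is a finite positive constant (convergence at $0$ uses $\alpha < 1$, convergence at $\infty$ uses $\alpha > 0$); this identity follows by the substitution $u = st$. Next I would show that, for each fixed $s > 0$, the kernel $(x,y) \mapsto 1 - e^{-s\psi(x,y)}$ is negative definite. Since $\psi$ is negative definite, Theorem~\ref{thm:laplace_kernel} applied with the Dirac measure $\mu = \delta(t-1)$, whose Laplace transform is $\mathcal{L}_\mu(s) = e^{-s}$, guarantees that $e^{-s\psi}$ is positive definite for every $s > 0$. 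Hence $-e^{-s\psi}$ is negative definite, and adding the constant $1$ preserves negative definiteness, because $\sum_{i,j} c_i c_j \cdot 1 = \big(\sum_i c_i\big)^2 = 0$ under the constraint $\sum_i c_i = 0$.

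Finally I would substitute $t = \psi(x,y)$ into the representation and integrate. For any $n$, any points $x_1,\dots,x_n$ and any reals $c_i$ with $\sum_i c_i = 0$,
\[
\sum_{i,j=1}^n c_i c_j\, \psi(x_i,x_j)^\alpha = \frac{1}{C_\alpha}\int_0^\infty \Big[\sum_{i,j=1}^n c_i c_j \big(1 - e^{-s\psi(x_i,x_j)}\big)\Big]\, s^{-1-\alpha}\, ds .
\]
By the previous step the bracketed finite sum is $\leq 0$ for every $s$, while the weight $s^{-1-\alpha}/C_\alpha$ is strictly positive, so the integral is $\leq 0$; this is exactly the negative definiteness of $\psi^\alpha$. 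The interchange of the finite sum with the integral is harmless.

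I expect the main obstacle to be the third step, namely recognizing that $1 - e^{-s\psi}$ is negative definite for each $s$: this is where the argument pivots, combining the positive definiteness of $e^{-s\psi}$ (itself a consequence of Theorem~\ref{thm:laplace_kernel}) with the closure of negative definite kernels under sign change and under addition of constants. The integral representation and the concluding integration are routine once that observation is in hand.
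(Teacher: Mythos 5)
Your proof is correct, and since the paper offers no proof of this statement (it is quoted as Corollary 2.10 in Chapter 3 of Berg--Christensen--Ressel), your argument --- the subordination formula $t^\alpha = C_\alpha^{-1}\int_0^\infty (1-e^{-st})\,s^{-1-\alpha}\,ds$ combined with the fact that each $1-e^{-s\psi}$ is negative definite, obtained here from Theorem~\ref{thm:laplace_kernel} with $\mu=\delta(t-1)$ (i.e., Schoenberg's theorem) --- is precisely the classical proof given in that reference. The preliminary two-point computation showing $\psi\geq 0$ is also the right move, since it is needed both for $\psi^\alpha$ to be well defined and to apply Theorem~\ref{thm:laplace_kernel} as stated in the paper, which assumes a kernel with values in $\mathbb{R}_{+}$.
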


As a result, both $\delta_p(\cdot,\cdot)$ and $\delta_{bc}(\cdot,\cdot)$ are \emph{nd} by choosing $\alpha = 1/2$
in Theorem~\ref{thm:power_nd}. 
By employing either $\delta_p(\cdot,\cdot)$ or $\delta_{bc}(\cdot,\cdot)$ along with $\mu(t) = \delta(t-1)$ in Theorem~\ref{thm:laplace_kernel}, we obtain the Grassmannian Laplace kernels
\begin{align}
	k_{l,bc}(\Mat{X},\Mat{Y}) &= \exp \bigg(-\beta \sqrt{1 - \big|\det \big(\Mat{X}^T\Mat{Y} \big)\big|} \bigg)~,~~\beta > 0\;,
	\label{eqn:laplace_grassmann_bc}\\
	k_{l,p}(\Mat{X},\Mat{Y}) &= \exp \bigg(-\beta \sqrt{p - \big\|\Mat{X}^T\Mat{Y} \big\|_F^2} \bigg)~,~~\beta > 0\;.
	\label{eqn:laplace_grassmann_proj}
\end{align}

As shown in~\cite{Steinwart:Book}, the RBF and Laplace kernels are universal for $\mathbb{R}^d, d>0$. Since the Pl\"{u}cker and projection embeddings map to Euclidean spaces, this property clearly extends to the Grassmannian RBF and Laplace kernels.

\subsubsection{Binomial Kernels.}
By choosing $\mu(t) = \exp(-\beta_0 t)u(t)$, where $u(t)$ is the unit (or Heaviside) step function, \ie, $u(t) = \int_{-\infty}^t\delta(x)\mathrm{d}x$,
we obtain the Grassmannian binomial kernels
\begin{align}	 
	k_{b,bc}(\Mat{X},\Mat{Y}) &= \frac{1}{\beta - \big|\det \big(\Mat{X}^T\Mat{Y} \big)\big|}~,~~\beta > 1\;,
	\label{eqn:binom_grassmann_bc}\\
	k_{b,p}(\Mat{X},\Mat{Y}) &= \frac{1}{\beta - \big\|\Mat{X}^T\Mat{Y} \big\|_F^2}~,~~\beta > p\;. 
	\label{eqn:binom_grassmann_proj}
\end{align}
Note that the generating function $\mu$ is a valid measure only for $\beta_0 > 0$. This translates into the constraints on $\beta$ given in Eq.~\ref{eqn:binom_grassmann_bc} and Eq.~\ref{eqn:binom_grassmann_proj}.

A more general form of binomial kernels can be derived by noting that, if $k(\cdot,\cdot):\mathcal{X} \times \mathcal{X} \to \mathbb{R}_+$ is {\it pd},
then so is $k^\alpha(\cdot,\cdot),~\alpha >0$ (see Proposition 2.7 in Chapter 3 of~\cite{Berg:1984}). This lets us define the Grassmannian kernels
\begin{align}
	k_{bi,bc}(\Mat{X},\Mat{Y}) &= \Big(\beta - \big|\det \big(\Mat{X}^T\Mat{Y} \big)\big|\Big)^{-\alpha}~,~~\beta > 1,~\alpha > 0\;,
	\label{eqn:binom_grassmann_bc2} \\
	k_{bi,p}(\Mat{X},\Mat{Y}) &= \bigg( \beta - \big\|\Mat{X}^T\Mat{Y} \big\|_F^2 \bigg)^{-\alpha}~,~~\beta > p,~\alpha > 0\;. 
	\label{eqn:binom_grassmann_proj2} 
\end{align}

To show that the binomial kernels are universal, we note that 
\begin{equation*}
(1-t)^{-\alpha} = \sum_{j=0}^{\infty} 
\begin{pmatrix}
-\alpha\\
j
\end{pmatrix}
(-1)^jt^j\;,
~~{\rm with} ~~
\begin{pmatrix}
\alpha\\
j
\end{pmatrix}
= \prod_{i=1}^j {(\alpha - i +1)}/{i}\;.
\end{equation*}
It can be seen that $\begin{pmatrix}
-\alpha\\
j
\end{pmatrix}
(-1)^j > 0$, which implies that both $k_{bi,bc}$  and $k_{bi,p}$ have non-negative and full Taylor series. This, as was shown in Corollary 4.57 of~\cite{Steinwart:Book}, is a necessary and sufficient condition for a kernel to be universal.

\subsection{Conditionally Positive Kernels}
\label{sec:cpd_kernel}

Another important class of kernels is the so-called conditionally positive definite kernels~\cite{Berg:1984}. Formally:
\begin{definition}[Conditionally Positive Definite Kernels]
Let $\mathcal{X}$ be a nonempty set. A symmetric function $\psi: \mathcal{X} \times \mathcal{X} \to \mathbb{R}$ is a conditionally
positive definite (\emph{\textbf{cpd}}) kernel on $\mathcal{X}$ if and only if
$\sum_{i,j=1}^nc_ic_jk(x_i,x_j) \geq 0$ for any $n \in \mathbb{N}$, $x_i \in \mathcal{X}$ and $c_i \in \mathbb{R}$ with $\sum_{i=1}^n c_i = 0$.
\end{definition}

The relations between \emph{cpd} kernels and \emph{pd} ones were studied by Berg \etal~\cite{Berg:1984} and Sch\"{o}lkopf~\cite{Scholkopf:NIPS:2001} among others. Before introducing \emph{cpd} kernels on the Grassmannian, we state an important property of \emph{cpd} kernels. 
\begin{proposition}[\cite{Scholkopf:NIPS:2001}]
For a kernel algorithm that is translation invariant, one can equally use \emph{cpd} kernels instead of \emph{pd} ones. 
\end{proposition}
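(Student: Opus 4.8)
The plan is to reduce the statement to the standard fact that a \emph{cpd} kernel differs from a genuine \emph{pd} kernel only by terms that a translation-invariant algorithm cannot see. The central tool is a lemma relating the two classes: for any fixed reference point $x_0 \in \mathcal{X}$, the function
\[
\hat{k}(x,y) = k(x,y) - k(x,x_0) - k(x_0,y) + k(x_0,x_0)
\]
is \emph{pd} whenever $k$ is \emph{cpd}. First I would establish this lemma. Given arbitrary coefficients $c_1,\dots,c_n \in \mathbb{R}$ (not necessarily summing to zero), I would introduce the auxiliary coefficient $c_0 = -\sum_{i=1}^n c_i$ attached to $x_0$, so that $\sum_{i=0}^n c_i = 0$. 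Applying the \emph{cpd} inequality to the enlarged system $\{x_0,x_1,\dots,x_n\}$ and substituting $c_0 = -\sum_i c_i$ into the expanded quadratic form, the cross terms reorganize (using the symmetry of $k$) precisely into $\sum_{i,j=1}^n c_i c_j \hat{k}(x_i,x_j) \ge 0$, which is the definition of $\hat{k}$ being \emph{pd}.

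With the lemma in hand, I would next examine how $\hat{k}$ and $k$ enter a kernel algorithm. By the \emph{Representer Theorem}, the learned function has the form $\hat{f}(x) = \sum_j c_j k(x,x_j) + b$ with an unconstrained offset $b$, and the regularizer is the quadratic form $\sum_{i,j} c_i c_j k(x_i,x_j)$. The defining feature of a translation-invariant algorithm is that its solution is determined under the constraint $\sum_j c_j = 0$, which is exactly what optimizing over the free offset $b$ produces. I would then substitute $\hat{k}$ for $k$ and expand: the correction terms $-\big(\sum_j c_j\big) k(x,x_0)$ and $+\big(\sum_j c_j\big) k(x_0,x_0)$ carry the factor $\sum_j c_j$ and therefore vanish under the constraint, while $-\sum_j c_j\, k(x_0,x_j)$ is a constant independent of $x$ that is absorbed into $b$. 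The same bookkeeping applied to the quadratic form shows $\sum_{i,j} c_i c_j \hat{k}(x_i,x_j) = \sum_{i,j} c_i c_j k(x_i,x_j)$ exactly on the constraint set.

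Putting these together, running the algorithm with the \emph{cpd} kernel $k$ produces the identical objective value and the identical decision boundary (up to the offset $b$) as running it with the genuine \emph{pd} kernel $\hat{k}$; hence $k$ may be used in place of a \emph{pd} kernel, which is the claim. The main obstacle is not the algebra---the lemma is a short symmetrization---but rather pinning down the precise meaning of \textbf{translation invariant} and verifying that the coefficients the algorithm manipulates genuinely satisfy $\sum_j c_j = 0$. This constraint is the entire load-bearing hypothesis: it is exactly what annihilates the $x_0$-dependent correction terms, so the crux of a careful proof is to articulate the class of algorithms for which it holds.
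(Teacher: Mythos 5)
The paper does not actually prove this proposition: it imports it from Sch\"{o}lkopf's \emph{The kernel trick for distances} and justifies it only with the informal remark that an SVM's margin maximization is independent of the position of the origin. Your argument is therefore not competing with a proof in the paper, but it is a correct reconstruction of the argument in the cited source. The centering lemma is right and your derivation of it is sound: with $c_0=-\sum_{i=1}^n c_i$ the enlarged quadratic form $\sum_{i,j=0}^n c_ic_j k(x_i,x_j)\ge 0$ expands exactly to $\sum_{i,j=1}^n c_ic_j\hat{k}(x_i,x_j)\ge 0$, so $\hat{k}(x,y)=k(x,y)-k(x,x_0)-k(x_0,y)+k(x_0,x_0)$ is \emph{pd} whenever $k$ is \emph{cpd} (no factor of $\tfrac12$ is needed for this version). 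The second half --- that the $x_0$-dependent corrections either carry a factor $\sum_j c_j$ or are constants absorbed into the offset $b$, so that $k$ and $\hat{k}$ induce the same objective and decision rule on the constraint set $\sum_j c_j=0$ --- is also the standard bookkeeping. The one place where your write-up is softer than it should be is the one you yourself flag: ``translation invariant'' must be \emph{defined} as the property that the algorithm's output depends on the Gram matrix only through the quadratic form restricted to the hyperplane $\sum_j c_j=0$ (equivalently, only through pairwise feature-space distances), with any remaining dependence absorbed by a free offset. Once that is taken as the definition rather than a consequence, your argument is complete; the paper's SVM example (the bias term makes the primal problem origin-independent, forcing $\sum_j c_j y_j$-type constraints in the dual) is exactly an instance of it.
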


This property relaxes the requirement of having \emph{pd} kernels for certain types of kernel algorithms. A kernel algorithm is translation invariant if it is independent of the position of the origin. 
For example, in SVMs, maximizing the margin of the separating hyperplane between two classes is independent of
the position of the origin. As a result, one can seamlessly use a \emph{cpd} kernel instead of a \emph{pd} kernel in SVMs.
To introduce \emph{cpd} kernels on Grassmannians, we rely on the following proposition:
\begin{proposition}[\cite{Berg:1984}]
If $f: \mathcal{X} \times \mathcal{X} \to \mathbb{R}_+$ is \emph{nd} then $-\log(1 + f)$ is \emph{cpd}.
\end{proposition}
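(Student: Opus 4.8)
The key observation is that a symmetric function $\psi$ is \emph{cpd} exactly when $-\psi$ is \emph{nd}: both definitions restrict attention to coefficients satisfying $\sum_i c_i = 0$ and differ only in the sense of the inequality. The claim therefore reduces to showing that $\log(1+f)$ is \emph{nd}. My plan is to exhibit $\log(1+f)$ as a nonnegative superposition of the elementary \emph{nd} kernels $1 - e^{-sf}$. First I would record the Frullani integral identity
\[
\log(1+t) = \int_0^\infty \frac{e^{-s} - e^{-(1+t)s}}{s}\,\mathrm{d}s = \int_0^\infty \frac{e^{-s}}{s}\big(1 - e^{-st}\big)\,\mathrm{d}s\,, \quad t \geq 0\,,
\]
and apply it pointwise with $t = f(x,y)$, so that $\log(1+f) = \int_0^\infty \tfrac{e^{-s}}{s}\big(1 - e^{-sf}\big)\,\mathrm{d}s$.

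Second, for each fixed $s > 0$, since $f$ is \emph{nd} and nonnegative, Theorem~\ref{thm:laplace_kernel} applied with $\mu = \delta(t-1)$ (so that $\mathcal{L}_\mu(\beta f) = e^{-\beta f}$) shows that $e^{-sf}$ is \emph{pd}. I would then verify the elementary fact that $1 - \phi$ is \emph{nd} whenever $\phi$ is \emph{pd}: for any $c_i$ with $\sum_i c_i = 0$ the constant term drops out, since
\[
\sum_{i,j} c_i c_j \big(1 - \phi(x_i,x_j)\big) = \Big(\sum_i c_i\Big)^2 - \sum_{i,j} c_i c_j \phi(x_i,x_j) = - \sum_{i,j} c_i c_j \phi(x_i,x_j) \leq 0\,.
\]
Hence every integrand $1 - e^{-sf}$ is \emph{nd}.

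Finally, because the \emph{nd} property is preserved under multiplication by nonnegative scalars and under integration against a nonnegative weight, the superposition with weight $e^{-s}/s \geq 0$ is again \emph{nd}. Concretely, for fixed $c_i$ with $\sum_i c_i = 0$ one moves the finite quadratic form inside the integral,
\[
\sum_{i,j} c_i c_j \log\!\big(1 + f(x_i,x_j)\big) = \int_0^\infty \frac{e^{-s}}{s}\Big(\sum_{i,j} c_i c_j \big(1 - e^{-sf(x_i,x_j)}\big)\Big)\,\mathrm{d}s \leq 0\,,
\]
the inner sum being $\leq 0$ for every $s$ by the previous step. Thus $\log(1+f)$ is \emph{nd}, and consequently $-\log(1+f)$ is \emph{cpd}.

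I expect the main obstacle to be the analytic justification of the interchange of the finite sum with the integral together with the integrability of the representation, rather than the algebra. Fortunately the integrand is benign at both ends: as $s \to 0^+$ the factor $1 - e^{-sf} \sim sf$ cancels the $1/s$ singularity, while as $s \to \infty$ the factor $e^{-s}$ forces rapid decay. Since only finitely many points are involved, Tonelli's theorem legitimizes the interchange, and the proof is complete.
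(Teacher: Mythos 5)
Your argument is correct. Note that the paper itself offers no proof of this proposition --- it is stated as a citation to Berg \emph{et al.}, so there is no internal proof to compare against; what you have written is essentially the standard argument from that reference, assembled from tools the paper already uses. The reduction of ``$-\log(1+f)$ is \emph{cpd}'' to ``$\log(1+f)$ is \emph{nd}'' is immediate from the definitions, the Frullani representation $\log(1+t)=\int_0^\infty \frac{e^{-s}}{s}\bigl(1-e^{-st}\bigr)\,\mathrm{d}s$ is the right decomposition, and the positive definiteness of $e^{-sf}$ for each $s>0$ is exactly Theorem~\ref{thm:laplace_kernel} with $\mu=\delta(t-1)$ (Schoenberg's theorem), so your proof is consistent with the paper's own machinery. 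Two minor points of hygiene: the invocation of Tonelli is slightly off, since the summands $c_ic_j\bigl(1-e^{-sf(x_i,x_j)}\bigr)$ are not sign-definite --- but no such theorem is needed, because the sum is finite and each term is separately integrable (the $s\to 0^+$ cancellation you note handles the only potential singularity), so linearity of the integral suffices; and you should observe that $f\ge 0$ is genuinely used, both so that the Frullani identity applies pointwise with $t=f(x,y)$ and so that the integrand decays. Symmetry of $\log(1+f)$ is inherited from $f$, completing the verification that it is \emph{nd}.
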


This lets us derive the Grassmannian \emph{cpd} kernels
\begin{align}
	k_{log,bc}(\Mat{X},\Mat{Y}) &= -\log\bigg( 2 - \det\big(\Mat{X}^T\Mat{Y} \big) \bigg)\;,
	\label{eqn:log_grassmann_bc2}\\
	k_{log,p}(\Mat{X},\Mat{Y}) &= -\log\bigg( p + 1 - \big\|\Mat{X}^T\Mat{Y} \big\|_F^2 \bigg)\;. 
	\label{eqn:log_grassmann_proj2}
\end{align}

The ten new kernels derived in this section are summarized in Table~\ref{tab:grassmannian_kernel}.
Note that given the linear  \emph{Pl\"{u}cker} and \emph{projection} kernels, \ie, $k_{lin,bc}(\Mat{X},\Mat{Y}) = |\det(\Mat{X}^T\Mat{Y})|$
and $k_{lin,p}(\Mat{X},\Mat{Y}) = \|\Mat{X}^T\Mat{Y}\|_F^2$, 
it is possible to obtain the polynomial and Gaussian extensions via standard kernel construction rules~\cite{Shawe-Taylor:2004}.
However, our approach lets us derive many other kernels in a principled manner by, \eg, exploiting different measures in Theorem~\ref{thm:laplace_kernel}. Nonetheless, here, we confined ourselves to deriving kernels corresponding to the most popular ones in Euclidean space, and leave the study of additional kernels as future work. 

\section{Experimental Evaluation}%
\label{sec:experiments}%
In this section, we  compare our new kernels with the baseline kernels $k_{bc}^2$ and $k_p$ using three different kernel-based algorithms on Grassmannians: kernel SVM, kernel k-means and kernelized Locality Sensitive Hashing (kLSH). In our experiments, unless stated otherwise, we obtained the kernel parameters (\ie, $\beta$ for all kernels except the logarithm ones 
and $\alpha$ for the polynomial and binomial cases) by cross-validation. 
\subsection{Gender Recognition from Gait}%
We first demonstrate the benefits of our kernels on a binary classification problem on the Grassmannian using SVM and the Grassmannian 
Graph-embedding Discriminant Analysis (GGDA) proposed in~\cite{Harandi:CVPR:2011}.
To this end, we consider the task of gender recognition from gait (\ie, videos of people walking).
We used Dataset-B of the CASIA gait database~\cite{CASIA_DATASET}, which comprises 124 individuals (93 males and 31 females).
The gait of each subject was captured from 11 viewpoints. Every video is represented by a gait energy image (GEI) of size $32 \times 32$ (see Fig.~\ref{fig:CASIA_example}), which has proven effective for gender recognition~\cite{GEI:TIP:2009}.

In our experiment, we used the videos captured with normal clothes and created a subspace of order 3 from the 11 GEIs corresponding to the different viewpoints.
This resulted in 731 points on  $\GRASS{3}{1024}$. We then randomly selected 20 individuals (10 male, 10 female) as training set and used the remaining individuals for testing.
In Table~\ref{tab:table_CASIA_performance}, we report the average accuracies over 10 random partitions.
Note that for the SVM classifier, all new kernels derived from the Pl\"{u}cker embedding outperform $k_{bc}^2$, with highest accuracy obtained with the binomial kernel. Similarly, all new projection kernels outperform $k_p$, and the polynomial kernel achieves the overall highest accuracy of $89.3\%$. For GGDA, bar the case of $k_{log,p}$, all new kernels also outperform previously-known ones.

\def\CASIASIZE {0.08}
\begin{figure}[!tb]
      \centering
        \includegraphics[width=\CASIASIZE\columnwidth,keepaspectratio]{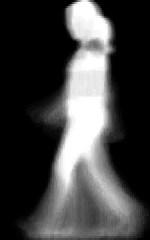}
        \includegraphics[width=\CASIASIZE\columnwidth,keepaspectratio]{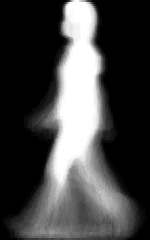}
        \includegraphics[width=\CASIASIZE\columnwidth,keepaspectratio]{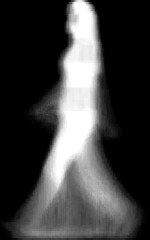}
        \includegraphics[width=\CASIASIZE\columnwidth,keepaspectratio]{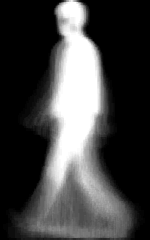}
        \caption{\footnotesize   GEI samples from CASIA~\cite{CASIA_DATASET}.}
        \label{fig:CASIA_example}
\end{figure}

\begin{table}[!tb]
 	\caption    {\footnotesize    
	    \textbf{Gender recognition.} Accuracies on the CASIA gait dataset~\cite{CASIA_DATASET}.
    }
  	\centering
	 \begin{footnotesize}
    \begin{tabular*}{1.0\textwidth}{@{\extracolsep{\fill} } lcccccc}
    	\toprule {\textbf{kernel}}
    	&{$\bf k_{bc}^2$} &{$\bf k_{p,bc}$ } &{$\bf k_{r,bc}$} &{$\bf k_{l,bc}$} &{$\bf k_{bi,bc}$} &{$\bf k_{log,bc}$}\\{\textbf{SVM}}
    	&$76.8\% \pm 9.1$ &$84.1\% \pm 7.2$ &$85.8\% \pm 4.6$ &$84.5\% \pm 4.5$ &$\bf 86.4\% \pm 4.4$ &$82.7\% \pm 7.4$ \\
    	{\textbf{GGDA\cite{Harandi:CVPR:2011}}}
    	&$83.7\% \pm 3.7.$ &$89.0\% \pm 3.7$ &$88.3\% \pm 3.6$ &$88.0\% \pm 3.6$ &$\bf 89.4\% \pm 3.1$ &$84.9\% \pm 3.5$ \\
    	\midrule {\textbf{kernel}}
    	&{$\bf k_{p}$} &{$\bf k_{p,p}$} &{$\bf k_{r,p}$} &{$\bf k_{l,p}$} &{$\bf k_{bi,p}$} &{$\bf k_{log,p}$}\\{\textbf{SVM}}
    	&$83.7\% \pm 4.3$ &$\bf 89.3\% \pm 5.8$ &$88.2\% \pm 5.8$ &$ 87.6\% \pm 5.5$ &$88.7\% \pm 5.1$ &$85.8\% \pm 8.3$ \\
    	{\textbf{GGDA\cite{Harandi:CVPR:2011}}}
    	&$90.3\% \pm 4.7$ &$\bf 93.5\% \pm 2.7$ &$91.3\% \pm 3.8$ &$91.0\% \pm 3.8$ &$ 91.1\% \pm 3.1$ &$89.7\% \pm 3.6$ \\
    	\bottomrule	
    \end{tabular*}
    \label{tab:table_CASIA_performance}
    \end{footnotesize}
\end{table}

\subsection{Pose Categorization}
As a second experiment, we evaluate the performance of our kernels on the task of clustering on the Grassmannian using kernel k-means. 
To this end, we used the CMU-PIE face dataset~\cite{Sim:PAMI:2003}, which contains images of 67 subjects with 13 different poses and 21 different illuminations (see Fig.~\ref{fig:PIE_samples} for examples). 
From each image, we computed a $2\times 2$ spatial pyramid of LBP~\cite{Ojala:PAMI:2002} histograms and concatenated them to form a $232$ dimensional descriptor. For each subject, we collected the images acquired with the same pose, but different illuminations, in an image set, which we then represented as a linear subspace of order 3. This resulted in a total of $67 \times 13 = 871$ Grassmannian points on $\GRASS{3}{232}$. We used 10 samples from each pose to compute the kernel parameters. 

\def \LEFT_MICE_PER {0.30}
\def \RIGHT_POSE_PER {0.8}
\begin{figure}[!tb]
		\begin{minipage}{0.50 \textwidth}
    \centering
 	  \includegraphics[width=\RIGHT_POSE_PER \textwidth]{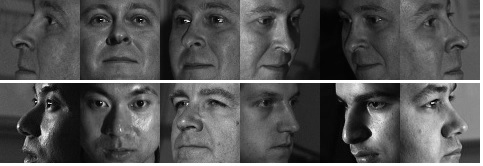} 
      \caption{\footnotesize Sample images from CMU-PIE.}
      \label{fig:PIE_samples}
	\end{minipage}
	\hspace{2ex}	
	\begin{minipage}{0.45 \textwidth}
	\centering
	\includegraphics[width= \LEFT_MICE_PER \textwidth]{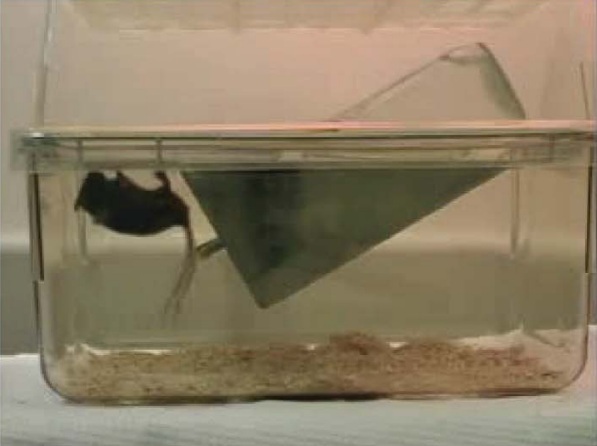}%
	\includegraphics[width= \LEFT_MICE_PER \textwidth]{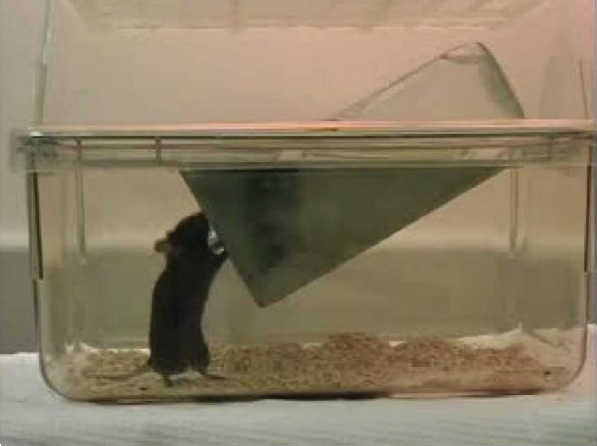}%
	\includegraphics[width= \LEFT_MICE_PER \textwidth]{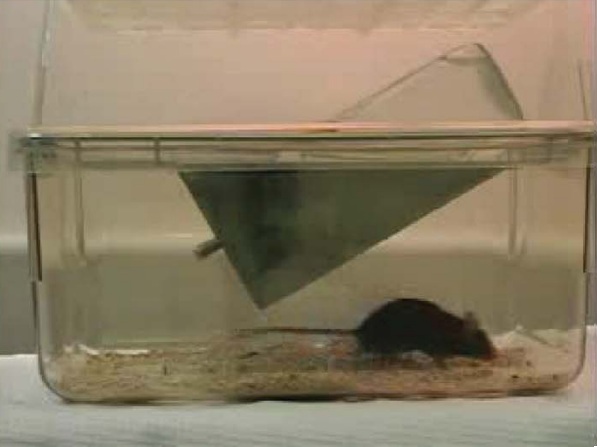}
	\caption{Sample images from the mouse behavior dataset~\cite{Mouse:Dataset}.}
	\label{fig:Mouse_samples}
	\end{minipage}
\end{figure}

The goal here is to cluster together image sets representing the same pose. To evaluate the quality of the clusters, we report both the clustering accuracy and the Normalized Mutual Information (NMI)~\cite{Strehl:AAAI:2000}, which measures the amount of statistical information shared by random variables representing the cluster distribution and the underlying class distribution of the data points.
From the results given in Table~\ref{tab:table_PIE_performance},  we can see that, with the exception of $k_{log,p}$, the new kernels in each embedding outperform their respective baseline, $k_p$ or $k^2_{bc}$. For the Binet-Cauchy kernels, the maximum accuracy (and NMI score) is reached by the RBF kernel. The overall maximum accuracy of $82.9\%$ is achieved by the projection-based binomial kernel.

We also evaluated the intrinsic k-means algorithm of~\cite{Turaga:PAMI:2011}. This algorithm achieved $67.7\%$ accuracy and an NMI score of $0.75$. Furthermore, intrinsic k-means required $9766s$ to perform clustering on an i7 machine using Matlab. On the same machine, the runtimes for kernel k-means using $k_{r,bc}$ and $k_{bi,p}$ (which achieve the highest accuracies in Table~\ref{tab:table_PIE_performance}) 
were $3.1s$ and $2.5s$, respectively. This clearly demonstrates the benefits of RKHS embedding to tackle clustering problems on the Grassmannian.

\begin{table}[!tb]
 	\caption    { \small   
	    \textbf{Pose categorization.} Clustering accuracies on the CMU-PIE dataset.
    }
  	\centering
    \begin{tabular*}{1.0\textwidth}{@{\extracolsep{\fill} } lcccccc}
    	\toprule {\textbf{kernel}}
    							&{$\bf k^2_{bc}$} &{$\bf k_{p,bc}$} &{$\bf k_{r,bc}$} &{$\bf k_{l,bc}$} &{$\bf k_{bi,bc}$} &{$\bf k_{log,bc}$}\\
    	{\textbf{accuracy}}  	&$70.3\%$ 		 &$72.2\% $ 		   &{$\bf 78.9\%$}	 &$74.8\%$ 		   &$78.5\%$ 		  &$72.2\% $ \\
	   	{\textbf{NMI}}  		&$0.763$ 		 &$0.779$ 			   &{$\bf 0.803$}	 &$0.786$ 		   &$0.798$ 		  &$0.772$ \\
    	\midrule {\textbf{kernel}}
    							&{$\bf k_{p}$} &{$\bf k_{p,p}$} &{$\bf k_{r,p}$} &{$\bf k_{l,p}$} &{$\bf k_{bi,p}$} &{$\bf k_{log,p}$}\\
    	{\textbf{accuracy}}    	&$77.1\%$ 		&$79.9\%$ 		  &$80.9\%$ 		&$79.8\%$ 		 &$\bf 82.9\%$ 		   &$74.4\%$ \\
	   	{\textbf{NMI}}  		&$0.810\%$ 		&$0.817$ 		  &$0.847$ 			&$0.843$ 		 &$\bf 0.853$ 		   &$0.812$ \\
	   	\bottomrule	
    \end{tabular*}
    \label{tab:table_PIE_performance}
\end{table}

\subsection{Mouse Behavior Analysis}

Finally, we utilized kernelized Locality-Sensitive Hashing (kLHS)~\cite{Kulis:PAMI:2012} to perform recognition on the 2000 videos of the mice behavior dataset~\cite{Mouse:Dataset}. 
The basic idea of kLSH is to search for a projection from an RKHS to a low-dimensional Hamming space, where each sample is encoded with a $b$-bit vector called the hash key. The approximate nearest-neighbor to a query can then be found efficiently in time sublinear in the number of training samples.

The mice dataset~\cite{Mouse:Dataset} contains 8 behaviors (\ie, \emph{drinking}, \emph{eating}, \emph{grooming}, \emph{hanging}, \emph{rearing},
\emph{walking}, \emph{resting} and \emph{micro-movement of head}) of several mice with different coating colors, sizes and genders (see Fig.~\ref{fig:Mouse_samples} for examples). In each video, we estimated the background to extract the region containing the mouse in each frame. These regions were then resized to $48\times 48$, and the video represented with an order 6 subspace, thus yielding points on $\GRASS{6}{2304}$.
We randomly chose 1000 videos for training and used the remaining 1000 videos for testing. We report the average recognition accuracy over 10 random partitions.

Fig.~\ref{fig:Mouse_results} depicts the recognition accuracies of the new and baseline kernels as a function of the number of bits $b$.
For the Pl\"{u}cker embedding kernels, the gap between our RBF kernel and $k_{bc}^2$ reaches $23\%$ for a hash key of size 30. For the same hash key size, the projection-based heat kernel outperforms $k_p$ by more than $14\%$, and thus reaches the overall highest accuracy of $67.2\%$.

\def \MICE_PER {0.450}
\begin{figure}[!tb]
    \centering
    \begin{subfigure}[b]{\MICE_PER \textwidth}
    	\includegraphics[width=\textwidth]{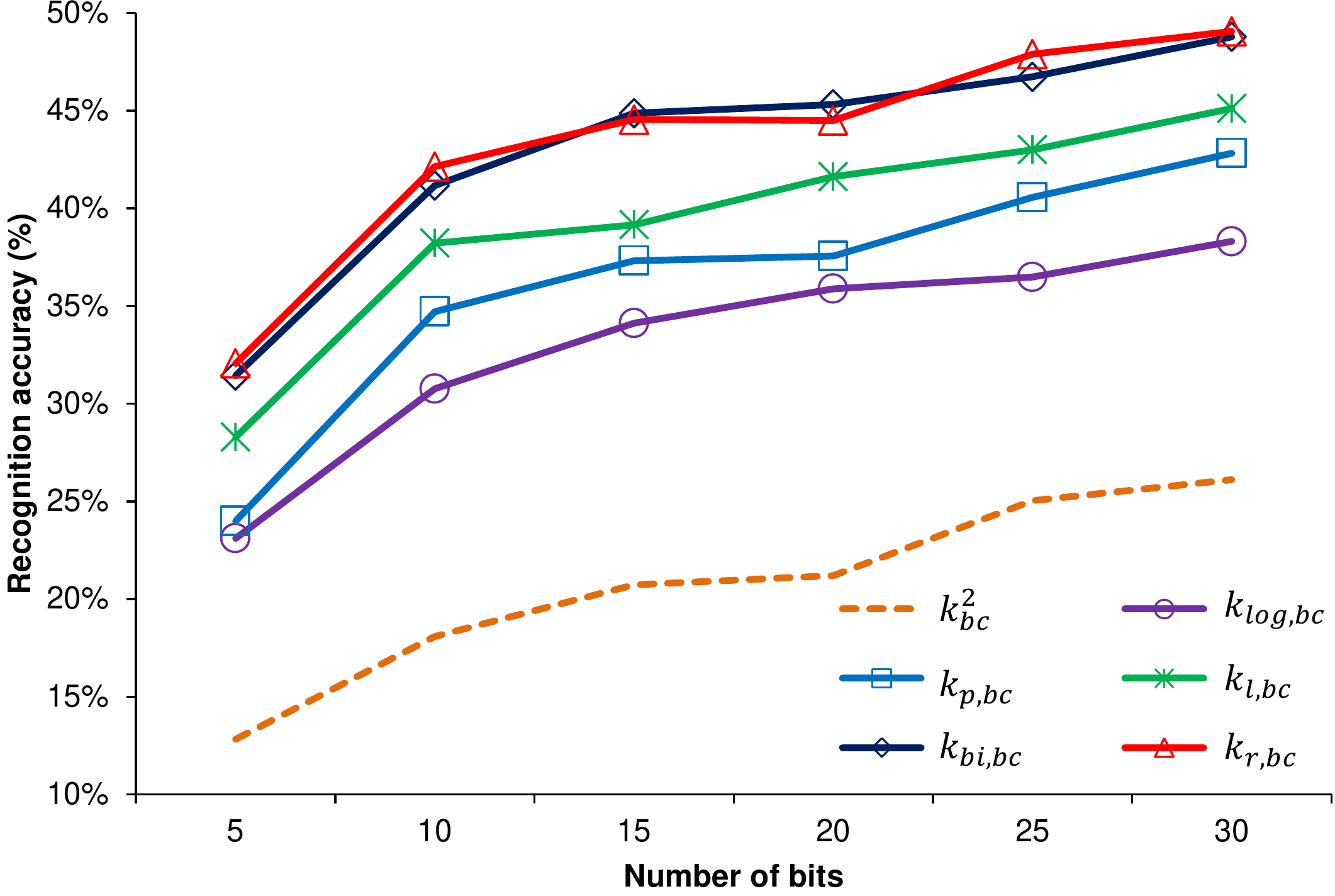}
    \end{subfigure} %
    \begin{subfigure}[b]{\MICE_PER \textwidth}
    	\includegraphics[width=\textwidth]{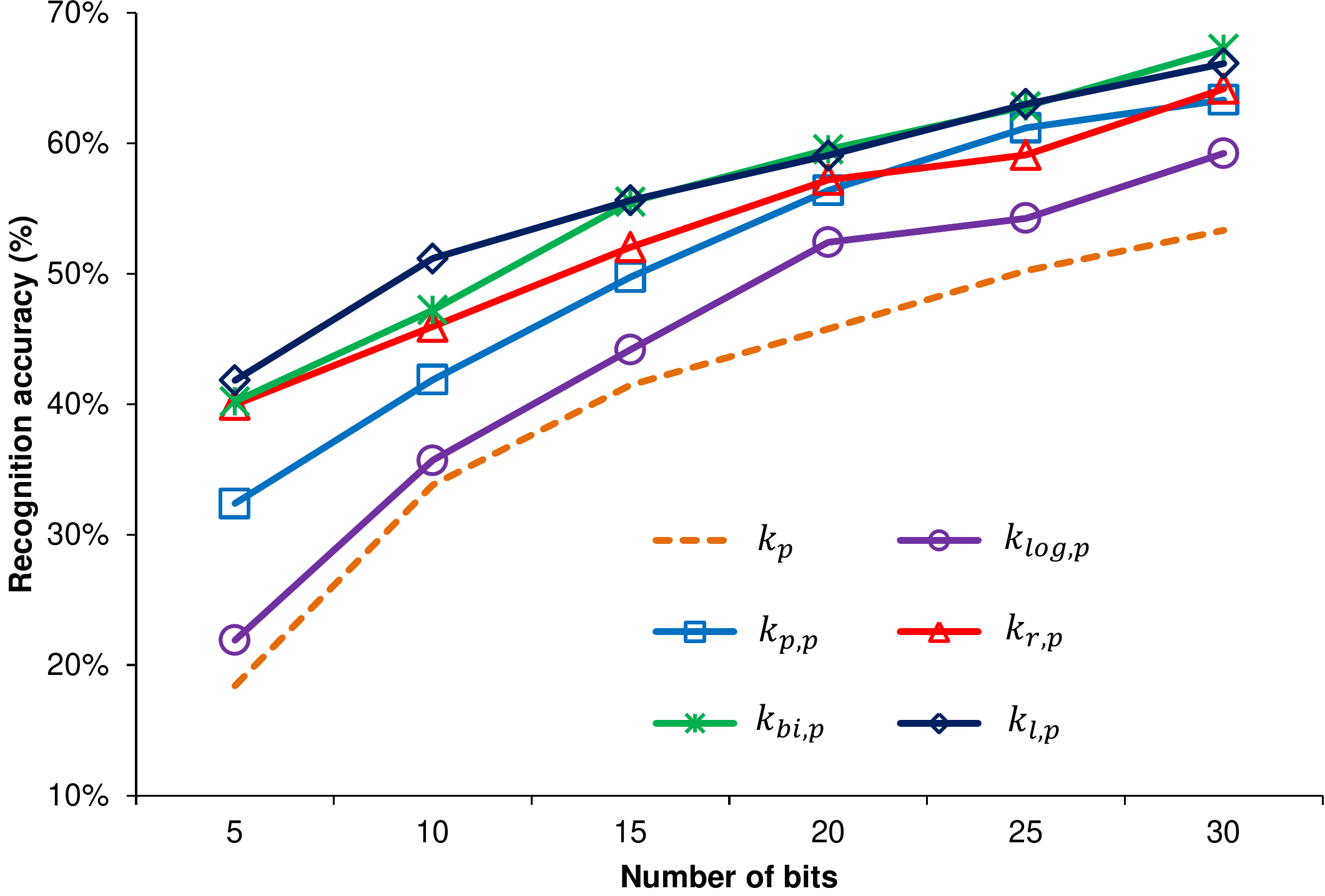}
    \end{subfigure}       
    \caption{{\small \textbf{Video hashing.}} {\small Approximate nearest-neighbor accuracies for kernels derived from \textbf{(left)}  
    the Pl\"{u}cker and \textbf{(right)} 
     the projection embeddings.}}
	\label{fig:Mouse_results}
\end{figure}

\subsection{Kernel Sparse Coding}

We performed an experiment on body-gesture recognition using the UMD Keck dataset~\cite{Lin:ICCV:2009}. 
To this end, we consider the problem of kernel sparse coding on the Grassmannian which 
can be formulated as
{\small
\begin{align}
	\underset{\Vec{y}}{\min} \:  \left\| \phi(\Mat{X}) - \sum_{j=1}^{N}\nolimits y_{j} \phi(\Mat{D}_j) \right\|^2 +\lambda \|\Vec{y}\|_1\;,
    \label{eqn:SC_Grass}
\end{align}
}%
where $\Mat{D}_j \in \GRASS{p}{d}$ is a dictionary atom, $\Mat{X} \in \GRASS{p}{d}$ is the query and $\vec{y}$ is the vector of sparse codes. 
In practice, we used each training sample as an atom in the dictionary. Note that, as shown in~\cite{Gao:ECCV:2010},~\eqref{eqn:SC_Grass} only depends on the kernel values computed between the dictionary atoms, as well as between the query point and the dictionary. Classification is then performed by assigning the label of the dictionary element $\Mat{D}_i$ with strongest response $\Vec{y}_i$ to the query.

The UMD Keck dataset~\cite{Lin:ICCV:2009} comprises 14 body gestures with static and dynamic backgrounds 
(see examples in Figure~\ref{fig:Keck_samples}). The dataset contains 126 videos from static scenes and 168 ones from
dynamic environments.  Following the experimental protocol used in~\cite{Lui:JMLR:2012}, we first extracted the region of interest around each gesture and resized it to $32 \times 32$ pixels. We then represented each video by
a subspace of order 6, thus yielding points on $\GRASS{6}{1024}$. 

\def \KECK_SIZE {0.30}
\begin{figure}[!b]
	\centering
    \begin{subfigure}[b]{\KECK_SIZE \textwidth}
    	\includegraphics[width=\textwidth]{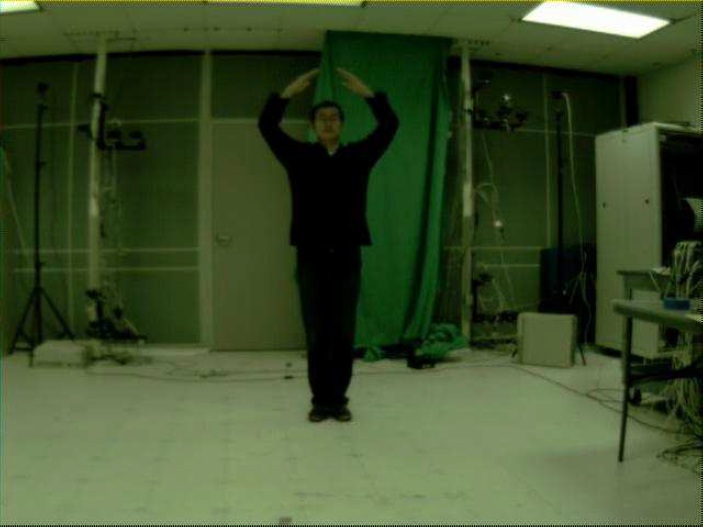}
    \end{subfigure} %
    \begin{subfigure}[b]{\KECK_SIZE \textwidth}
    	\includegraphics[width=\textwidth]{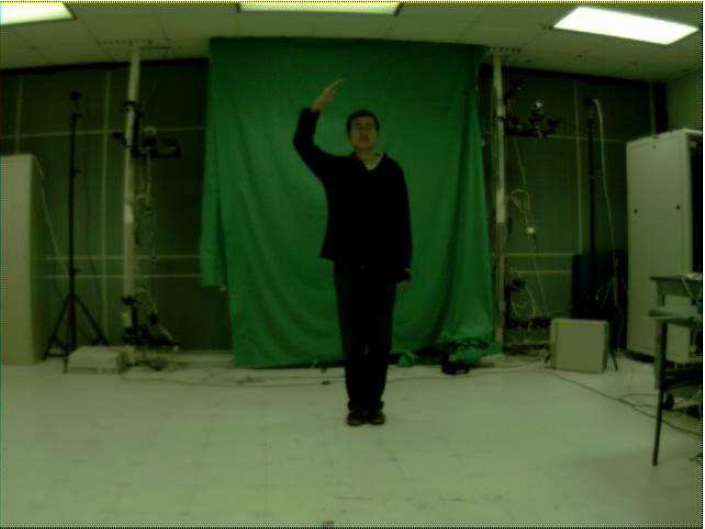}
    \end{subfigure} %
        \begin{subfigure}[b]{\KECK_SIZE \textwidth}
    	\includegraphics[width=\textwidth]{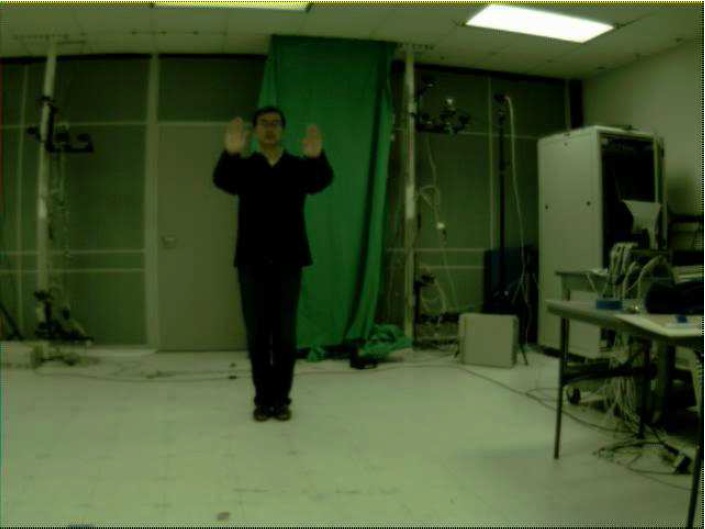}
    \end{subfigure}       
    \caption{{\small Sample images from the UMD Keck body-gesture dataset~\cite{Lin:ICCV:2009}.}}
	\label{fig:Keck_samples}	
\end{figure}

Table~\ref{tab:table_KECK_performance_SC} compares the performance of our kernels
with that of $k_{bc}^2$ and $k_{p}$. Note that our kernels outperform the baselines in both the static and dynamic settings.
The maximum accuracy is obtained by $k_{l,p}$ for the static scenario ($99.2\%$), and by $k_{bi,p}$ for the dynamic one ($99.1\%$).
For the same experiments, the state-of-the-art solution using product manifolds~\cite{Lui:JMLR:2012} achieves $94.4\%$ and $92.3\%$, 
respectively. 
\begin{table}[!tb]
 	\caption    {\footnotesize    
	    \textbf{Body-gesture recognition.} Accuracies on UMD Keck~\cite{Lin:ICCV:2009} using kernel sparse coding.
    }
  	\centering
    \begin{tabular*}{1.0\textwidth}{@{\extracolsep{\fill} } lcccccc}
    	\toprule {\textbf{kernel}}
    	&{$\bf k_{bc}^2$} &{$\bf k_{p,bc}$ } &{$\bf k_{r,bc}$} &{$\bf k_{l,bc}$} &{$\bf k_{bi,bc}$} &{$\bf k_{log,bc}$}\\{\textbf{Static}}
    	&$84.1\% \pm 3.6$ &$88.1\% \pm 6.0$ &$90.5\% \pm 4.7$ &$91.3\% \pm 7.7$ &$\bf 93.7\% \pm 3.6$ &$87.3\% \pm 3.6$ \\{\textbf{Dynamic}}
    	&$90.2\%$ &$92.0\%$ &$93.8\%$ &$\bf 94.6\%$ &$\bf 94.6\%$ &$92.9\%$\\
    	\midrule {\textbf{kernel}}
    	&{$\bf k_{p}$} &{$\bf k_{p,p}$} &{$\bf k_{r,p}$} &{$\bf k_{l,p}$} &{$\bf k_{bi,p}$} &{$\bf k_{log,p}$}\\{\textbf{Static}}
    	&$88.9\% \pm 8.4$ &$94.4\% \pm 6.0$ &$97.6\% \pm 4.1$ &$\bf 99.2\% \pm 1.3$ &$96.0\% \pm 3.6$ &$92.9\% \pm 2.4$ \\{\textbf{Dynamic}}
    	&$91.1\%$ &$92.0\%$ &$98.2\%$ &$98.2\%$ &$\bf 99.1\%$ &$97.3\%$\\
    	\bottomrule	
    \end{tabular*}
    \label{tab:table_KECK_performance_SC}
\end{table}

\section{Conclusions and Future Work}
We have introduced a set of new positive definite kernels to embed Grassmannian into Hilbert spaces, which have a more familiar Euclidean structure. This set includes, among others, universal Grassmannian kernels, which have the ability to approximate general functions. Our experiments have demonstrated the superiority of such kernels over previously-known Grassmannian kernels, \ie, the Binet-Cauchy kernel~\cite{Wolf:JMLR:2003} and the projection kernel~\cite{Hamm:ICML:2008}. It is important to keep in mind, however, that choosing the right kernel for the data at hand remains an open problem. In the future, we intend to study if searching for the best probability measure in Theroem~4 could give a partial answer to this question.

\appendix
\section{Proof of Length Equivalence}
\label{sec:proof_app}
Here, we prove Theorem~2 from Section~3,
\ie, the equivalence up to a scale of $\sqrt{2}$ of the length of any given curve under the Binet-Cauchy distance $\delta_{bc}$ derived from the Pl\"ucker embedding and the geodesic distance $\delta_g$.
The proof of this theorem follows several steps.
We start with the definition of curve length and intrinsic metric.
Without any assumption on differentiability, let $(\mathcal{M},d)$ be a metric space.  
A curve in $\mathcal{M}$ is a continuous function $\gamma : [0, 1] \rightarrow \mathcal{M}$ and joins the starting point $\gamma(0) = x$
to the end point $\gamma(1) = y$. 
\begin{definition} \label{def:curve_length}
	The length of a curve $\gamma$ is the supremum of $L(\gamma ; \{t_i \})$ over all possible partitions \mbox{$\{t_i \}$},
	where \mbox{$0 = t_0 < t_1 < \cdots < t_{n-1} < t_n = 1$} and
	\mbox{{$L(\gamma ; \{t_i \}) = \sum_{i}d\left(\gamma(t_i),\gamma(t_{i-1})\right)$}}.
\end{definition}

\begin{definition} \label{def:intrinsic_metric_thm}
	The intrinsic metric $\widehat{\delta}(x,y)$ on $\mathcal{M}$
	is defined as the infimum of the lengths of all paths from $x$ to $y$.
\end{definition}
	
\begin{theorem}[~\cite{Hartley:IJCV:2013}] \label{thm:intrinsic_metric_thm}
	If the intrinsic metrics induced by two metrics $d_1$ and $d_2$ are identical up to a scale $\xi$,
	then the length of any given curve is the same under both metrics up to $\xi$.
\end{theorem}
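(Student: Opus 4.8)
The plan is to reduce the statement to a single, more fundamental fact: the length functional is unchanged when one replaces a metric by the intrinsic metric it induces. Write $\widehat{d}$ for the intrinsic metric associated with a metric $d$ (Definition~\ref{def:intrinsic_metric_thm}) and $L_d(\gamma)$ for the length of a curve $\gamma$ computed with $d$ via Definition~\ref{def:curve_length}. The hypothesis is then $\widehat{d_1} = \xi\,\widehat{d_2}$, and the goal is $L_{d_1}(\gamma) = \xi\,L_{d_2}(\gamma)$ for every curve $\gamma$. The crux is the identity
\begin{equation*}
L_d(\gamma) = L_{\widehat{d}}(\gamma)\,,
\end{equation*}
valid for any metric $d$ and any curve $\gamma$; granting this, the theorem follows immediately. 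This is precisely the step that makes the stated (and a priori surprising) direction work, since the intrinsic metric is built from lengths by an infimum and one must be sure that no length information is lost in the process.

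I would establish this identity by two inequalities. For $L_{\widehat{d}}(\gamma) \ge L_d(\gamma)$, observe first that $\widehat{d}(x,y) \ge d(x,y)$ for all $x,y$: every curve joining $x$ to $y$ has length at least $d(x,y)$ by the triangle inequality, so the infimum defining $\widehat{d}(x,y)$ is $\ge d(x,y)$. Consequently every partition sum computed with $\widehat{d}$ dominates the corresponding sum computed with $d$, and taking suprema over partitions preserves the inequality. For the reverse $L_{\widehat{d}}(\gamma) \le L_d(\gamma)$, I would note that on each subinterval the restricted curve $\gamma|_{[t_{i-1},t_i]}$ joins $\gamma(t_{i-1})$ to $\gamma(t_i)$, so by definition of the intrinsic metric $\widehat{d}(\gamma(t_{i-1}),\gamma(t_i)) \le L_d\big(\gamma|_{[t_{i-1},t_i]}\big)$; summing over the partition and invoking additivity of length under subdivision gives $\sum_i \widehat{d}(\gamma(t_{i-1}),\gamma(t_i)) \le \sum_i L_d(\gamma|_{[t_{i-1},t_i]}) = L_d(\gamma)$, and a supremum over all partitions yields the claim.

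The remaining ingredients are routine. Length is positively homogeneous in the metric: scaling $d$ by $\xi$ scales every partition sum, and hence the supremum, by $\xi$, so $L_{\xi d}(\gamma) = \xi\,L_d(\gamma)$ directly from Definition~\ref{def:curve_length}. Chaining the identity with the hypothesis then gives
\begin{equation*}
L_{d_1}(\gamma) = L_{\widehat{d_1}}(\gamma) = L_{\xi\,\widehat{d_2}}(\gamma) = \xi\,L_{\widehat{d_2}}(\gamma) = \xi\,L_{d_2}(\gamma)\,,
\end{equation*}
which is exactly the conclusion. I expect the main obstacle to be the reverse inequality $L_{\widehat{d}} \le L_d$: it is the only step not immediate from the definitions, resting on additivity of length under subdivision and on bounding the intrinsic distance between \emph{consecutive} partition points by the length of the intervening arc rather than the whole curve. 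The easy inequality and the homogeneity then follow mechanically.
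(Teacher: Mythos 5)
Your proof is correct: reducing the statement to the identity $L_d(\gamma)=L_{\widehat{d}}(\gamma)$ via the two inequalities you describe (using $\widehat{d}\geq d$ for one direction, and the bound $\widehat{d}(\gamma(t_{i-1}),\gamma(t_i))\leq L_d\big(\gamma|_{[t_{i-1},t_i]}\big)$ together with additivity of length under subdivision for the other), then invoking homogeneity of the length functional under scaling of the metric, is exactly the standard argument and matches the proof in the cited source~\cite{Hartley:IJCV:2013}. The paper itself states this theorem without proof, importing it from that reference, so there is no in-paper argument to compare against; your proposal supplies the missing proof correctly and identifies the genuinely non-trivial step (the inequality $L_{\widehat{d}}\leq L_d$) accurately.
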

\begin{theorem}[~\cite{Hartley:IJCV:2013}]
	If $d_1(x,y)$ and $d_2(x,y)$ are two metrics defined on a space $\mathcal{M}$ such that
	\begin{equation}
		\lim_{d_1(x,y) \rightarrow 0} \:\frac{d_2(x,y)}{d_1(x,y)} = 1.
		\label{eqn:intrinsic_metric0}
	\end{equation}
	uniformly (with respect to $x$ and $y$), then their intrinsic metrics are identical.
\end{theorem}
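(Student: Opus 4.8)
The plan is to prove the stronger statement that \emph{every} curve has the same length under $d_1$ and under $d_2$, from which the equality of the intrinsic metrics follows immediately by taking infima over all paths (Definition~\ref{def:intrinsic_metric_thm}). First I would unpack the uniform-limit hypothesis into a two-sided estimate valid near the diagonal: for every $\epsilon>0$ there is a $\delta>0$ such that $d_1(x,y)<\delta$ implies $(1-\epsilon)\,d_1(x,y)\le d_2(x,y)\le(1+\epsilon)\,d_1(x,y)$. The uniformity in $x,y$ is exactly what makes $\delta$ independent of the points, so that a single threshold can be applied simultaneously along an entire curve.

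Next, fix a curve $\gamma:[0,1]\to\mathcal{M}$ and write $S_k(P)=\sum_i d_k(\gamma(t_i),\gamma(t_{i-1}))$ for the sum of $d_k$ over a partition $P=\{t_i\}$, so that by Definition~\ref{def:curve_length} the length is $L_k(\gamma)=\sup_P S_k(P)$. The two facts I would lean on are: (i) $S_k$ is monotone nondecreasing under refinement of $P$, by the triangle inequality for $d_k$; and (ii) since $\gamma$ is continuous on the compact interval $[0,1]$ it is uniformly continuous, so any partition can be refined until every segment satisfies $d_1(\gamma(t_i),\gamma(t_{i-1}))<\delta$. On such a fine partition the two-sided estimate holds segmentwise and, upon summing, yields $(1-\epsilon)S_1(P)\le S_2(P)\le(1+\epsilon)S_1(P)$.

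I would then promote this to the suprema. For the upper bound, take any partition $Q$, refine it to $Q'$ with all $d_1$-segments below $\delta$; monotonicity gives $S_2(Q)\le S_2(Q')\le(1+\epsilon)S_1(Q')\le(1+\epsilon)L_1(\gamma)$, hence $L_2(\gamma)\le(1+\epsilon)L_1(\gamma)$. For the lower bound, choose $P$ with $S_1(P)\ge L_1(\gamma)-\eta$, refine to $P'$ as above, and use $L_2(\gamma)\ge S_2(P')\ge(1-\epsilon)S_1(P')\ge(1-\epsilon)\bigl(L_1(\gamma)-\eta\bigr)$; letting $\eta\to0$ gives $L_2(\gamma)\ge(1-\epsilon)L_1(\gamma)$. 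Since $\epsilon>0$ is arbitrary, $L_1(\gamma)=L_2(\gamma)$, with the infinite-length case handled by the same chain (a large $S_1(P)$ forces a large $S_2(P')$, and vice versa). Taking the infimum over all paths joining $x$ to $y$ then yields $\widehat{\delta}_1(x,y)=\widehat{\delta}_2(x,y)$, so the intrinsic metrics are identical.

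The main obstacle I anticipate is the bookkeeping around the two independent suprema: the partitions that nearly realise $L_1(\gamma)$ need not be those that nearly realise $L_2(\gamma)$, while the uniform estimate only controls \emph{short} segments. The device that resolves this is common refinement together with the monotonicity of $S_k$, which lets one sufficiently fine partition serve both bounds at once. A secondary point to verify is that the uniform-limit hypothesis forces $d_1$ and $d_2$ to induce the same topology, so that ``$\gamma$ continuous'' is unambiguous and uniform continuity with respect to $d_1$ is legitimate; this again follows from the same two-sided estimate near the diagonal.
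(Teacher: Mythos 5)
The paper does not actually prove this statement: it is imported verbatim from Hartley \etal (the rotation-averaging paper, \cite{Hartley:IJCV:2013}) and used as a black box in the appendix, so there is no in-paper proof to compare yours against. Judged on its own terms, your argument is the standard one and is correct. The uniform limit gives the two-sided bound \((1-\epsilon)d_1\le d_2\le(1+\epsilon)d_1\) on all pairs with \(d_1<\delta\); monotonicity of the partition sums under refinement (triangle inequality) plus uniform continuity of \(\gamma\) on \([0,1]\) lets a single fine common refinement serve both suprema; and the resulting inequality \((1-\epsilon)L_1(\gamma)\le L_2(\gamma)\le(1+\epsilon)L_1(\gamma)\) for arbitrary \(\epsilon\) gives \(L_1(\gamma)=L_2(\gamma)\) for every curve, which is in fact stronger than the stated conclusion and is exactly the combination of this theorem with the preceding one that the paper needs for the curve-length equivalence of \(\delta_{bc}\) and \(\delta_g\).

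The one soft spot is your closing claim that the two-sided estimate forces \(d_1\) and \(d_2\) to induce the same topology. It gives one direction for free: if \(d_1(x,y)<\delta\) then \(d_2(x,y)\le(1+\epsilon)d_1(x,y)\), so \(d_1\)-continuity implies \(d_2\)-continuity. The converse does not follow, because the hypothesis constrains nothing about pairs with \(d_1(x,y)\ge\delta\); in the degenerate case where \(d_1\) is the discrete metric the hypothesis is vacuous, the class of \(d_1\)-curves collapses to constants, and the two intrinsic metrics can genuinely differ. The clean fix is to read ``curve'' as continuous with respect to a fixed underlying topology with which both metrics are compatible --- which is how the cited source, and the paper's application to \(\delta_{bc}\) and \(\delta_g\) on the Grassmannian, implicitly use the result. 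With that convention your refinement argument goes through unchanged.
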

Therefore, here, we need to study the behavior of 
\begin{equation*}
	\lim_{\delta_{g}^2(\Mat{X},\Mat{Y}) \rightarrow 0} \:\frac{\delta_{bc}^2(\Mat{X},\Mat{Y})}{\delta_{g}^2(\Mat{X},\Mat{Y})}\;
\end{equation*}
to prove our theorem on curve length equivalence.
	
\begin{proof}
Since $\sin(\theta_i) \rightarrow \theta_i$ for $\theta_i \rightarrow 0$, we can see that 
\begin{align*}
\lim_{\delta_g(\Mat{X},\Mat{Y}) \rightarrow 0} \:\frac{\delta_{bc}^2(\Mat{X},\Mat{Y})}{\delta_g^2(\Mat{X},\Mat{Y})} 
&= 
\lim_{\theta_i \rightarrow 0} \:\frac{2-2\prod_{i=1}^{p}(1 - \sin^2\theta_i)}{\sum_{i=1}^{p}\theta_i^2}\\ 
&= 
\lim_{\theta_i  \rightarrow 0} \:\frac{2-2\prod_{i=1}^{p}(1 - \theta_i^2)}{\sum_{i=1}^{p}\theta_i^2} 
= 
\lim_{\theta_i  \rightarrow 0} \:\frac{2-2(1 - \sum_{i=1}^{p}\theta_i^2)}{\sum_{i=1}^{p}\theta_i^2} 
=2\;.
\end{align*}
This, in conjunction with Theorem~\ref{thm:intrinsic_metric_thm}, concludes the proof.\qed
\end{proof}


\begin{thebibliography}{10}
\providecommand{\url}[1]{\texttt{#1}}
\providecommand{\urlprefix}{URL }

\bibitem{Absil:2008}
Absil, P.A., Mahony, R., Sepulchre, R.: Optimization Algorithms on Matrix
  Manifolds. Princeton University Press, Princeton, NJ, USA (2008)

\bibitem{Berg:1984}
Berg, C., Christensen, J.P.R., Ressel, P.: Harmonic Analysis on Semigroups.
  Springer (1984)

\bibitem{Chikuse:2003}
Chikuse, Y.: Statistics on Special Manifolds. Springer (2003)

\bibitem{Gao:ECCV:2010}
Gao, S., Tsang, I.W.H., Chia, L.T.: Kernel sparse representation for image
  classification and face recognition. In: Proc. European Conference on
  Computer Vision (ECCV). pp. 1--14 (2010)

\bibitem{Gopalan:PAMI:2013}
Gopalan, R., Li, R., Chellappa, R.: Unsupervised adaptation across domain
  shifts by generating intermediate data representations. IEEE Trans. Pattern
  Analysis and Machine Intelligence  (2014)

\bibitem{Hamm:ICML:2008}
Hamm, J., Lee, D.D.: Grassmann discriminant analysis: a unifying view on
  subspace-based learning. In: Proc. Int. Conference on Machine Learning
  (ICML). pp. 376--383 (2008)

\bibitem{Harandi:ICCV:2013}
Harandi, M., Sanderson, C., Shen, C., Lovell, B.C.: Dictionary learning and
  sparse coding on grassmann manifolds: An extrinsic solution. In: Proc. Int.
  Conference on Computer Vision (ICCV) (December 2013)

\bibitem{Harandi:CVPR:2011}
Harandi, M.T., Sanderson, C., Shirazi, S., Lovell, B.C.: Graph embedding
  discriminant analysis on {G}rassmannian manifolds for improved image set
  matching. In: Proc. IEEE Conference on Computer Vision and Pattern
  Recognition (CVPR). pp. 2705--2712 (2011)

\bibitem{Hartley:IJCV:2013}
Hartley, R., Trumpf, J., Dai, Y., Li, H.: Rotation averaging. Int. Journal of
  Computer Vision  103(3),  267--305 (2013)

\bibitem{Helmke-Grassmann}
Helmke, U., H\"uper, K., Trumpf, J.: Newtons's method on {G}rassmann manifolds.
  Preprint: [arXiv:0709.2205]  (2007)

\bibitem{Sadeep:CVPR:2013}
Jayasumana, S., Hartley, R., Salzmann, M., Li, H., Harandi, M.: Kernel methods
  on the {R}iemannian manifold of symmetric positive definite matrices. In:
  CVPR. pp. 73--80 (2013)

\bibitem{Jayasumana_2014_CVPR}
Jayasumana, S., Hartley, R., Salzmann, M., Li, H., Harandi, M.: Optimizing over
  radial kernels on compact manifolds. In: Proc. IEEE Conference on Computer
  Vision and Pattern Recognition (CVPR) (June 2014)

\bibitem{Mouse:Dataset}
Jhuang, H., Garrote, E., Yu, X., Khilnani, V., Poggio, T., Steele, A.D., Serre,
  T.: Automated home-cage behavioural phenotyping of mice. Nature
  communications  1, ~68 (2010)

\bibitem{Kulis:PAMI:2012}
Kulis, B., Grauman, K.: Kernelized locality-sensitive hashing. IEEE Trans.
  Pattern Analysis and Machine Intelligence  34(6),  1092--1104 (2012)

\bibitem{Lin:ICCV:2009}
Lin, Z., Jiang, Z., Davis, L.S.: Recognizing actions by shape-motion prototype
  trees. In: Proc. Int. Conference on Computer Vision (ICCV). pp. 444--451.
  IEEE (2009)

\bibitem{Lui:JMLR:2012}
Lui, Y.M.: Human gesture recognition on product manifolds. Journal of Machine
  Learning Research  13(1),  3297--3321 (2012)

\bibitem{Micchelli:JMLR:2006}
Micchelli, C.A., Xu, Y., Zhang, H.: Universal kernels. Journal of Machine
  Learning Research  7,  2651--2667 (2006)

\bibitem{Ojala:PAMI:2002}
Ojala, T., Pietikainen, M., Maenpaa, T.: Multiresolution gray-scale and
  rotation invariant texture classification with local binary patterns. IEEE
  Trans. Pattern Analysis and Machine Intelligence  24(7),  971--987 (2002)

\bibitem{Scholkopf:NIPS:2001}
Scholkopf, B.: The kernel trick for distances. In: Proc. Advances in Neural
  Information Processing Systems (NIPS). pp. 301--307 (2001)

\bibitem{Scholkopf:COLT:2001}
Sch{\"o}lkopf, B., Herbrich, R., Smola, A.J.: A generalized representer
  theorem. In: Computational learning theory. pp. 416--426. Springer (2001)

\bibitem{Shawe-Taylor:2004}
Shawe-Taylor, J., Cristianini, N.: Kernel Methods for Pattern Analysis.
  Cambridge University Press (2004)

\bibitem{Sim:PAMI:2003}
Sim, T., Baker, S., Bsat, M.: The cmu pose, illumination, and expression
  database. IEEE Trans. Pattern Analysis and Machine Intelligence  25(12),
  1615--1618 (2003)

\bibitem{Steinwart:Book}
Steinwart, I., Christmann, A.: Support vector machines. Springer (2008)

\bibitem{Strehl:AAAI:2000}
Strehl, A., Ghosh, J., Mooney, R.: Impact of similarity measures on web-page
  clustering. In: AAAI Workshop on Artificial Intelligence for Web Search. pp.
  58--64 (2000)

\bibitem{Subbarao:IJCV:2009}
Subbarao, R., Meer, P.: Nonlinear mean shift over {R}iemannian manifolds. Int.
  Journal of Computer Vision  84(1),  1--20 (2009)

\bibitem{Turaga:PAMI:2011}
Turaga, P., Veeraraghavan, A., Srivastava, A., Chellappa, R.: Statistical
  computations on {G}rassmann and {S}tiefel manifolds for image and video-based
  recognition. IEEE Trans. Pattern Analysis and Machine Intelligence  33(11),
  2273--2286 (2011)

\bibitem{Vemula:CVPR:2013}
Vemulapalli, R., Pillai, J.K., Chellappa, R.: Kernel learning for extrinsic
  classification of manifold features. In: Proc. IEEE Conference on Computer
  Vision and Pattern Recognition (CVPR). pp. 1782--1789 (2013)

\bibitem{Wolf:JMLR:2003}
Wolf, L., Shashua, A.: Learning over sets using kernel principal angles.
  Journal of Machine Learning Research  4,  913--931 (2003)

\bibitem{GEI:TIP:2009}
Yu, S., Tan, T., Huang, K., Jia, K., Wu, X.: A study on gait-based gender
  classification. IEEE Trans. Image Processing (TIP)  18(8),  1905--1910 (2009)

\bibitem{CASIA_DATASET}
Zheng, S., Zhang, J., Huang, K., He, R., Tan, T.: Robust view transformation
  model for gait recognition. In: International Conference on Image Processing
  (ICIP). pp. 2073--2076 (2011)

\end{thebibliography}
\end{document}